\documentclass[10pt,twocolumn]{article}

\usepackage[a4paper,margin=0.75in,columnsep=0.25in]{geometry}

\usepackage[T1]{fontenc}
\usepackage{lmodern}
\usepackage{microtype}
\usepackage{amsmath}
\usepackage{amssymb}
\usepackage{amsthm}
\usepackage{bm}
\usepackage{dsfont}
\usepackage{xspace}
\usepackage{graphicx}
\usepackage{caption}
\usepackage{subcaption}
\usepackage{booktabs}
\usepackage{algorithm}
\usepackage[noend]{algpseudocode}
\usepackage{newfloat}
\usepackage{listings}

\DeclareCaptionStyle{ruled}{
  labelfont=normalfont,
  labelsep=colon,
  strut=off
}

\floatstyle{ruled}
\newfloat{listing}{tb}{lst}{}
\floatname{listing}{Listing}

\usepackage[hyphens]{url}
\usepackage{natbib}
\usepackage[hidelinks]{hyperref}
\usepackage[nameinlink,noabbrev]{cleveref}
\usepackage{enumitem}
\newcommand{\our}{\emph{HE-CV-UCB}\xspace}

\newtheorem{theorem}{Theorem}
\newtheorem{lemma}[theorem]{Lemma}

\theoremstyle{definition}

\theoremstyle{remark}
\newtheorem{remark}[theorem]{Remark}

\title{Bayesian Optimization of CVaR with Unknown Hyperparameters}

\author{%
    Valentin Margraf$^{1,2}$,
    Jonas Hanselle$^{1,2,3}$,
    Viktor Bengs$^{3}$,\\
    Serafima Lebedeva$^{3,6}$,
    Eyke H\"ullermeier$^{1,2,3}$,
    Marcel Wever$^{4,5}$\\ \\
    \small
    $^{1}$LMU Munich\\
    \small$^{2}$Munich Center for Machine Learning (MCML)\\
    \small$^{3}$German Research Center for Artificial Intelligence (DFKI)\\
    \small$^{4}$Leibniz University Hannover\\
    \small$^{5}$L3S Research Center\\
    \small$^{6}$RPTU Kaiserslautern-Landau
}

\date{}
\begin{document}

\maketitle

\begin{abstract}
We study Bayesian optimization of the Conditional Value-at-Risk (CVaR) of an expensive black-box function that depends on a decision and an environmental variable. Existing no-regret algorithms assume that the correct Gaussian process (GP) kernel and its hyperparameters are known. In practice, however, they must be estimated from limited data, often yielding a misspecified surrogate. This is especially damaging for CVaR because misspecification can distort the estimated tail-risk and thus lead to catastrophic decisions. We propose HE-CV-UCB, the first no-regret algorithm for CVaR Bayesian optimization with unknown GP hyperparameters. Given a finite candidate set containing a well-specified model, HE-CV-UCB adaptively eliminates those inconsistent with the observed data, actively queries where the survivors disagree, and returns a design that is robust across the surviving candidates. We prove that it attains cumulative regret matching the oracle rate of CV-UCB up to a factor depending only on the candidate set's size. Across synthetic benchmarks and three real-world applications, HE-CV-UCB achieves competitive performance and frequently outperforms existing CVaR Bayesian optimization methods.
\end{abstract}

\section{Introduction}
\label{sec:intro}

Bayesian optimization (BO) is a popular framework for optimizing expensive black-box functions~\cite{garnett-book23a}, with applications spanning materials design~\cite{zhang-sr20a}, drug discovery~\cite{colliandre-mmb24a}, and hyperparameter optimization~\cite{snoek-nips12a,lindauer-jmlr22a}. BO proceeds iteratively by fitting a probabilistic surrogate model to the evaluations observed so far, typically a Gaussian process (GP)~\cite{rasmussen-book06a}, and selecting new evaluations through an acquisition function that balances exploration and exploitation. In many real-world applications, however, the outcome depends not only on the controllable decision variable but also on uncertain environmental conditions. A portfolio allocation is exposed to stochastic market movements~\cite{liang-jim17a,millar-gecco25a}, a treatment plan to variability in patient response~\cite{yu-medphys17a}, and an solver configuration to differences in problem-instance difficulty~\cite{hutter-lion14a}. 
Classical BO typically optimizes expected performance. In safety-critical applications, however, optimizing the average can be misleading: a decision with excellent expected performance may still incur catastrophic losses under unfavorable conditions. A portfolio may collapse during a market downturn, a treatment may fail for a vulnerable patient, or a solver configuration may time out on particularly difficult instances. This motivates risk-sensitive objectives such as Conditional Value-at-Risk (CVaR), which measures the expected outcome among the worst $(1-\alpha)$-fraction of realizations for a risk level $\alpha\in(0,1)$.

\begin{figure}[t]
    \includegraphics[width=0.98\linewidth]{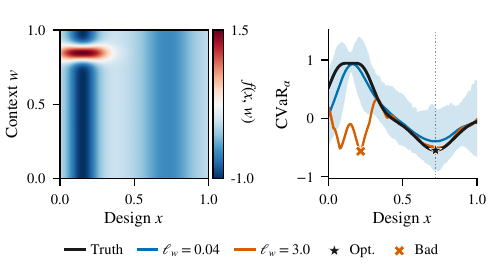}
\caption{Surrogate misspecification distorts CVaR estimation. \textbf{Left:} the objective $f(x,w)$, with a sharp ridge at $w\approx0.85$ for $x\lesssim0.4$. \textbf{Right:} the true $\mathrm{CVaR}_\alpha[f(x,W)]$ over designs $x$ (black). A well-specified surrogate ($\ell_w=0.04$, blue) resolves the ridge and its CVaR band contains the truth, while the over-smoothed surrogate ($\ell_w=3.0$, orange) is tail-blind and recommends a near-worst design ($x \approx 0.22$).}
  \label{fig:fig1}
\end{figure}

Existing CVaR BO algorithms, such as CV-UCB and CV-TS~\citep{nguyenquoc-neurips21a}, achieve sublinear regret under the assumption that the Gaussian process is correctly specified. In practice, this assumption is rarely satisfied. Kernel families are chosen heuristically, and their hyperparameters are typically selected from limited observations by maximum-likelihood or maximum-a-posteriori estimation, procedures that can be unreliable in finite-data regimes~\citep{lalchand-aabi19a} and thus result in misspecified surrogates. Surrogate misspecification is particularly harmful for CVaR optimization because it can translate into inaccurate risk estimates and consequently unsafe decisions, as illustrated in Figure~\ref{fig:fig1}. While several algorithms address unknown hyperparameters in mean-based Bayesian optimization~\citep{berkenkamp-jmlr19a,bogunovic-neurips21a,ziomek-neurips24a,ziomek-aistats25a}, the corresponding problem for CVaR optimization has remained unexplored. 

We close this gap with Hyperparameter-Elimination CVaR-UCB (\our), the first no-regret algorithm for CVaR Bayesian optimization that does not require the true kernel hyperparameters to be known. Instead, \our assumes only that the true model belongs to a finite candidate set. Building on the hyperparameter-elimination framework of \citet{ziomek-aistats25a}, our algorithm maintains this set of candidates, discards those whose predictions become statistically inconsistent with the observed data, actively queries where the surviving candidates disagree, and recommends a design that is robust across the remaining candidates. Such candidate sets can be obtained for example from surrogates fitted on related tasks, hyperparameter posterior samples, or domain knowledge. Our \textbf{contributions} are as follows.

%\paragraph{Contributions.}
\begin{itemize}
    \item We identify surrogate misspecification as a key challenge for CVaR Bayesian optimization because it can lead to inaccurate risk estimates and thus catastrophic decisions.
\item We propose \our, the first no-regret algorithm for CVaR Bayesian optimization with unknown kernel hyperparameters, which maintains a finite set of candidate models, actively eliminates those inconsistent with the observations, and recommends a design that is robust across the surviving candidates.
    \item We prove that \our achieves cumulative regret
    $
    \mathcal{O}\!\left(|\Theta|B + |\Theta|\sqrt{\xi_TT} + \beta_T|\Theta|\sqrt{T\gamma_T}\right),
    $
    recovering the oracle rate of CV-UCB up to a multiplicative factor of $|\Theta|$.
    \item We demonstrate on synthetic benchmarks and three real-world applications that \our achieves statistically significant improvements over competing methods while never being significantly outperformed.
\end{itemize}

\section{Related Work}

\paragraph{Risk-averse Bayesian optimization.} 
Bayesian optimization with environmental uncertainty has been studied under different objectives, ranging from expectation-based criteria to robust formulations that optimize against worst-case realizations~\citep{bogunovic-neurips18a,kirschner-aistats20a}. Risk measures such as Value-at-Risk (VaR) and Conditional Value-at-Risk provide an intermediate approach by focusing on adverse outcomes in the tail of the induced outcome distribution. \citet{cakmak-nips20a} proposed acquisition functions for general risk measures, including CVaR, but without regret guarantees. On the theoretical side, \citet{nguyenquoc-icml21a} studied Bayesian optimization of VaR, and \citet{nguyenquoc-neurips21a} introduced CV-UCB and CV-TS, the first algorithms with sublinear cumulative regret for CVaR. These guarantees, however, rely on a correctly specified GP surrogate with known kernel hyperparameters. 

\paragraph{Unknown kernel hyperparameters.}
Several works address unknown kernel hyperparameters and model misspecification in expectation-based Bayesian optimization. Practical approaches marginalize over hyperparameter posteriors~\citep{snoek-nips12a,lalchand-aabi19a}, but typically lack regret guarantees. On the theoretical side, \citet{bogunovic-neurips21a} analyze regret under bounded misspecification. \citet{berkenkamp-jmlr19a} adapt the kernel lengthscale online to guarantee inclusion of the true function in the reproducing kernel Hilbert space (RKHS) of the selected kernel, though the schedule is data-independent and known to be over-exploratory. \citet{ziomek-neurips24a} maintain multiple GP bandit learners with different lengthscales and eliminate learners whose performance becomes statistically inferior. More recently, \citet{ziomek-aistats25a} propose PE-GP-UCB for the time-varying setting, which handles uncertainty over a finite collection of GP surrogates, including different kernels, hyperparameters, and priors. \citet{sandberg-arxiv26a} extend their framework to Thompson sampling. All of these approaches, however, focus on expectation-based objectives.

Hyperparameter uncertainty in risk-averse Bayesian optimization, however, remains unexplored. To the best of our knowledge, \our is the first no-regret algorithm for CVaR Bayesian optimization that does not assume the true kernel hyperparameters to be known. We build on the elimination framework of \citet{ziomek-aistats25a} and extend it to CVaR optimization, where uncertainty over the surrogate affects a tail-risk functional rather than the objective value directly.

\section{CVaR Bayesian Optimization}
\label{sec:background}
We consider an unknown objective function $f : \mathcal{X} \times \mathcal{W} \to \mathbb{R}$, where $\mathcal{X} \subset \mathbb{R}^d$ is the design space and $\mathcal{W} \subset \mathbb{R}^{d_W}$ is the support of an environmental random variable $W$ with distribution $\mathbb{P}_W$. We assume $\mathbb{P}_W$ is known or can be estimated, and represent it by a finite set of environmental scenarios on which all CVaR quantities are evaluated. We treat large values of $f$ as undesirable, e.g., loss, runtime, or prediction error, and seek a design $x$ that minimizes
\begin{equation}\label{eq:obj}
x^\star \in \arg\min_{x \in \mathcal{X}} \mathrm{CVaR}_\alpha[f(x,W)].
\end{equation}
The risk measure is taken with respect to $W \sim \mathbb{P}_W$ for a fixed design $x$. Formally, for a real-valued random variable $Z$ with $\mathbb{E}|Z| < \infty$ and risk level $\alpha \in (0,1)$, the Value-at-Risk is the $\alpha$-quantile $\mathrm{VaR}_\alpha[Z] := \inf\{ z \in \mathbb{R} : \mathbb{P}(Z \le z) \ge \alpha \}$, and the Conditional Value-at-Risk is the average over the worst $(1-\alpha)$-fraction of outcomes $\mathrm{CVaR}_\alpha[Z] := \frac{1}{1-\alpha}\int_\alpha^1 \mathrm{VaR}_{\alpha'}[Z]\,\mathrm{d}\alpha'$.
%\begin{equation}
% \mathrm{CVaR}_\alpha[Z] := \frac{1}{1-\alpha}\int_\alpha^1 5\mathrm{VaR}_{\alpha'}[Z]\,\mathrm{d}\alpha'.
% \end{equation}
Larger $\alpha$ corresponds to greater risk aversion, with $\alpha \to 0$ recovering the expectation $\mathbb{E}[Z]$.

During optimization, the algorithm does not observe random realizations of $W$. Instead, at iteration $t$ it actively selects a query $(x_t,w_t) \in \mathcal{X} \times \mathcal{W}$ and observes a noisy evaluation
\begin{equation}
y_t = f(x_t,w_t) + \varepsilon_t, \qquad \varepsilon_t \sim \mathcal{N}(0,\sigma_\varepsilon^2).
\end{equation}

After optimization, only the recommended design is deployed, while the realized environment is drawn according to $W \sim \mathbb{P}_W$. We measure performance by the cumulative regret $R_T := \sum_{t=1}^T r_t$ with per-round regret $r_t := \mathrm{CVaR}_\alpha[f(x_t,W)] - \mathrm{CVaR}_\alpha[f(x^\star,W)]$. An algorithm is called no-regret if $R_T/T\rightarrow0$.

We model $f$ using a Gaussian process surrogate indexed by $\theta$, which specifies the kernel $k_\theta$ and its hyperparameters. Given observations $\mathcal{D}_{t-1} = \{(x_i,w_i,y_i)\}_{i=1}^{t-1}$, the GP posterior provides a predictive mean $\mu_{t-1}^{\theta}(x,w)$ and standard deviation $\sigma_{t-1}^{\theta}(x,w)$. We work in the frequentist setting and treat $f$ as a fixed unknown function. A surrogate $\theta$ is well specified if $f$ belongs to the RKHS induced by $k_\theta$ and satisfies $\|f\|_{\mathcal{H}_{k_\theta}}\le B$. Under this assumption, standard GP confidence bounds imply that, for confidence level $\delta\in(0,1)$, the choice
$\beta_t^\theta =
B+\sigma_\varepsilon(2(\gamma_{t-1}^{\theta}+1+\ln(1/\delta))^{1/2}$
yields pointwise confidence intervals
$l_{t-1}^{\theta}(x,w)\leq f(x,w)\leq u_{t-1}^{\theta}(x,w)$
for all $(x,w)\in\mathcal{X}\times\mathcal{W}$ and all $t\geq1$ with probability at least $1-\delta$~\citep{chowdhury-icml17a}, where
$u_{t-1}^{\theta}(x,w)=\mu_{t-1}^{\theta}(x,w)+\beta_t^\theta\sigma_{t-1}^{\theta}(x,w)$
and
$l_{t-1}^{\theta}(x,w)=\mu_{t-1}^{\theta}(x,w)-\beta_t^\theta\sigma_{t-1}^{\theta}(x,w)$.
Here,
$\gamma_t^\theta :=
\max_{A\subset\mathcal{X}\times\mathcal{W},\,|A|=t}
\frac12\log\det(I+\sigma_\varepsilon^{-2}K_A^\theta)$
is the maximum information gain associated with kernel $k_\theta$, where $K_A^\theta$ is the kernel matrix induced by $k_\theta$ on any set $A$ of $t$ query points in $\mathcal{X}\times\mathcal{W}$. As standard in GP bandit analyses, we further assume that $k_\theta((x,w),(x,w))\leq1$.

CV-UCB~\citep{nguyenquoc-neurips21a} lifts the pointwise bands to CVaR bounds $c_u(x;\alpha)=\mathrm{CVaR}_\alpha[u(x,W)]$ and $c_l(x;\alpha)=\mathrm{CVaR}_\alpha[l(x,W)]$, which by monotonicity of CVaR satisfy $c_l(x;\alpha) \le \mathrm{CVaR}_\alpha[f(x,W)] \le c_u(x;\alpha)$. CV-UCB then selects the design optimistically, $x_t \in \arg\min_{x \in \mathcal{X}} c_{l}(x;\alpha)$, and queries the environment at a \emph{lacing value}, a context whose pointwise band gap upper-bounds the CVaR band gap at the selected design. For a single surrogate this is attained at $w_t \in \arg\max_{w \in \mathcal{W}} \sigma_{t-1}(x_t,w)$, which maximizes $u_{t-1}-l_{t-1} = 2\beta_t\,\sigma_{t-1}$. If the surrogate is well-specified, CV-UCB achieves sublinear regret.\footnote{\citet{nguyenquoc-neurips21a} analyze CV-UCB in the Bayesian setting, treating $f$ as a GP sample with $\mathcal{X}$ discrete; while their companion VaR algorithm~\citep{nguyenquoc-icml21a} uses the same RKHS bound we adopt. Their settings differ only in the admissible $\beta_t$: the lifting to CVaR uses nothing beyond $l \le f \le u$ almost surely and monotonicity of CVaR.} 

In practice, however, the kernel and its hyperparameters are unknown and must be estimated from finite data. Consequently, the confidence guarantees above may fail, and a misspecified surrogate can produce inaccurate CVaR estimates and lead to poor decisions (Figure~\ref{fig:fig1}). 

\section{CVaR BO with Unknown Hyperparameters}
\label{sec:method}

We now develop \our, a no-regret algorithm for CVaR Bayesian optimization that does not require a well-specified surrogate to be known. Instead, we assume a finite candidate set $\Theta$ of GP models containing the unknown correct candidate $\theta^\star$, and extend the elimination framework of \citet{ziomek-aistats25a} to the CVaR setting. Candidates are eliminated based on statistical inconsistency with the observed data, while the acquisition rule reasons over the surviving candidates to construct valid CVaR bounds. Since the optimistic and pessimistic bounds may originate from different candidates, the resulting procedure requires a two-sided elimination mechanism and a generalized lacing argument.

\subsection{Proposed algorithm}
\label{sec:he-cv-setup}

Each GP surrogate $\theta \in \Theta$ provides posterior mean $\mu_{t-1}^\theta(x,w)$ and standard deviation $\sigma_{t-1}^\theta(x,w)$. We define the corresponding pointwise confidence bounds as $u_{t-1}^\theta := \mu_{t-1}^\theta + \beta_t^\theta \sigma_{t-1}^\theta$ and $l_{t-1}^\theta := \mu_{t-1}^\theta - \beta_t^\theta \sigma_{t-1}^\theta$. For the correct candidate $\theta^\star$, these bounds contain the unknown objective with high probability, i.e., $l_{t-1}^{\theta^\star}(x,w) \leq f(x,w) \leq u_{t-1}^{\theta^\star}(x,w)$. Since CVaR is monotone, this implies valid CVaR confidence bounds $c_{l_{t-1}}^{\theta^\star}(x;\alpha) \leq \mathrm{CVaR}_\alpha[f(x,W)] \leq c_{u_{t-1}}^{\theta^\star}(x;\alpha)$, with $c_{u_{t-1}}^\theta(x;\alpha):=\mathrm{CVaR}_\alpha[u_{t-1}^\theta(x,W)]$ and $c_{l_{t-1}}^\theta(x;\alpha):=\mathrm{CVaR}_\alpha[l_{t-1}^\theta(x,W)]$. Let $\Theta_{t-1}\subseteq\Theta$ denote the candidates surviving at the beginning of iteration $t$.

\paragraph{Doubly-optimistic decision.}
At each iteration, the algorithm selects a design and a candidate jointly by minimizing the lower CVaR confidence bound,
\begin{equation}
\label{eq:he-cv-xt}
(x_t, \theta_t) \;\in\; \arg\min_{(x,\theta) \in \mathcal{X} \times \Theta_{t-1}} c_{l_{t-1}}^\theta(x;\alpha).
\end{equation}
This choice is optimistic in two respects: it searches for the most promising design and simultaneously trusts the most optimistic surviving surrogate (the candidate that promises the lowest CVaR). Because $\theta^\star\in\Theta_{t-1}$ as long as the correct candidate is not eliminated, the selected value provides a lower bound on the unknown optimum: $c_{l_{t-1}}^{\theta_t}(x_t;\alpha)
\leq
c_{l_{t-1}}^{\theta^\star}(x^\star;\alpha)
\leq
\mathrm{CVaR}_\alpha[f(x^\star,W)].$

To obtain an upper bound on the CVaR of the selected design, we select the most pessimistic surviving candidate at $x_t$,
\begin{equation}
\label{eq:he-cv-vt}
\vartheta_t \;\in\; \arg\max_{\theta \in \Theta_{t-1}} c_{u_{t-1}}^\theta(x_t;\alpha).
\end{equation}
Again, because $\theta^\star$ remains feasible, $c_{u_{t-1}}^{\vartheta_t}(x_t;\alpha)
\geq
c_{u_{t-1}}^{\theta^\star}(x_t;\alpha)
\geq
\mathrm{CVaR}_\alpha[f(x_t,W)].$ The two selected candidates need not coincide: $\theta_t$ represents the most optimistic explanation of the optimum, whereas $\vartheta_t$ represents the most conservative explanation of the selected design. Combining the two bounds yields a bound on the per-round regret
\begin{equation}
\label{eq:certified-gap}
r_t
\leq
c_{u_{t-1}}^{\vartheta_t}(x_t;\alpha)
-
c_{l_{t-1}}^{\theta_t}(x_t;\alpha).
\end{equation}

\paragraph{Lacing value.}
The right-hand side of \eqref{eq:certified-gap} is a functional over the full environmental distribution, while only one environmental realization can be queried per iteration. Following the terminology of CV-UCB~\citep{nguyenquoc-neurips21a}, we call a query point a \emph{lacing value} if the pointwise confidence gap at that point upper-bounds the corresponding CVaR gap. Since our optimistic and pessimistic bounds may come from different surviving surrogates, we select
\begin{equation}
\label{eq:he-cv-lv}
w_t \;\in\; \arg\max_{w \in \mathcal{W}} \underbrace{\Bigl[\, u_{t-1}^{\vartheta_t}(x_t,w) - l_{t-1}^{\theta_t}(x_t,w) \,\Bigr]}_{=:\; G_t(w)}.
\end{equation}
Unlike standard CV-UCB, the two functions defining the gap need not be ordered pointwise, so $G_t(w)$ may be negative. Nevertheless, Lemma~\ref{lem:cvar-shift} gives $r_t \leq G_t(w_t)$, showing that the selected query controls the instantaneous regret. Hence, $w_t$ is a lacing value for the cross-surrogate confidence gap. The resulting gap captures both posterior uncertainty within each candidate and disagreement between candidates:
\begin{align*}
u_{t-1}^{\vartheta_t}(x_t,w) - l_{t-1}^{\theta_t}(x_t,w) = \underbrace{\mu_{t-1}^{\vartheta_t}(x_t,w) - \mu_{t-1}^{\theta_t}(x_t,w)}_{\text{mean disagreement}} \\ + {\beta_t^{\vartheta_t}}\,\sigma_{t-1}^{\vartheta_t}(x_t,w) + {\beta_t^{\theta_t}}\,\sigma_{t-1}^{\theta_t}(x_t,w),
\end{align*}
When $\theta_t=\vartheta_t$, the disagreement term vanishes and the gap reduces to the standard GP confidence width. Otherwise, the algorithm actively queries regions where surviving candidates disagree.

\paragraph{Elimination criteria.}
Since the candidate set $\Theta$ may contain misspecified surrogates, we eliminate candidates whose predictions become statistically incompatible with the observations. Importantly, the analysis does not require identifying the true surrogate $\theta^\star$ uniquely: Theorem~\ref{thm:regret} only requires that at least one correctly specified candidate survives with high probability. Additional surviving candidates may increase conservatism but cannot invalidate the guarantee.

At each iteration, we test the two candidates that define the current CVaR bounds: the optimistic candidate $\theta_t$ and the pessimistic candidate $\vartheta_t$. Their residuals are
$\eta_t := y_t-\mu_{t-1}^{\theta_t}(x_t,w_t)$
and
$\hat{\eta}_t := y_t-\mu_{t-1}^{\vartheta_t}(x_t,w_t)$.
For a candidate $\theta$, let
$S^\theta_t=\{i\leq t:\theta_i=\theta\}$ and
$\hat S^\theta_t=\{i\leq t:\vartheta_i=\theta\}$
denote the iterations on which $\theta$ contributed to the optimistic or pessimistic bound, respectively. 

For a candidate $\theta$ that is selected as either an optimistic or pessimistic bound, the corresponding residuals decompose into observation noise and model error: $\eta_i=\varepsilon_i+(f(x_i,w_i)-\mu_{i-1}^{\theta}(x_i,w_i))$.
For a correctly specified candidate, the model error is controlled by the GP confidence bound,
$|f(x_i,w_i)-\mu_{i-1}^{\theta}(x_i,w_i)|
\leq
\beta_i^\theta\sigma_{i-1}^{\theta}(x_i,w_i)$,
while the accumulated observation noise is bounded by the following concentration result, which adapts Lemma~5.1 of \citet{ziomek-aistats25a} to our setting.

\begin{lemma}[Noise concentration]
\label{lem:noise-concentration}
Let $\delta_B \in (0,1)$ and set
$\xi_t = 2\sigma_\varepsilon^2\log(2|\Theta|\pi^2t^2/3\delta_B)$.
Then with probability at least $1-\delta_B$, for all $\theta\in\Theta$, all $t\leq T$, and both
$S\in\{S^\theta_t,\hat S^\theta_t\}$,
it holds
$\left|\sum_{i\in S}\varepsilon_i\right|\leq\sqrt{\xi_t|S|}$.
\end{lemma}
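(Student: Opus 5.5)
The plan is to condition on the selection sets and reduce to a Gaussian tail bound, then take a union bound over candidates, the two index sets, and time. The subtlety is that $S^\theta_t$ and $\hat S^\theta_t$ are random, because $\theta_i$ and $\vartheta_i$ depend on past data. However, $\theta_i$ and $\vartheta_i$ are chosen by \eqref{eq:he-cv-xt} and \eqref{eq:he-cv-vt} from $\mathcal{D}_{i-1}$ and $\Theta_{i-1}$, before $\varepsilon_i$ is drawn. So the indicator $\mathds{1}\{\theta_i=\theta\}$ is $\mathcal{F}_{i-1}$-measurable, where $\mathcal{F}_{i-1}=\sigma(\varepsilon_1,\dots,\varepsilon_{i-1})$. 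Since $\varepsilon_i\sim\mathcal{N}(0,\sigma_\varepsilon^2)$ is independent of $\mathcal{F}_{i-1}$, the sum $\sum_{i\in S}\varepsilon_i$ is a sum of $|S|$ i.i.d. $\mathcal{N}(0,\sigma_\varepsilon^2)$ variables, re-indexed by the successive selection times.

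Concretely, for fixed $\theta$ define $\tau_k$ as the $k$-th iteration with $\theta_i=\theta$, and set $\tilde\varepsilon_k:=\varepsilon_{\tau_k}$. By the optional-sampling/strong Markov argument for i.i.d. sequences (or simply noting that each $\tau_k$ is a stopping time and $\varepsilon_{\tau_k}$ is independent of $\mathcal{F}_{\tau_k-1}$), the $\tilde\varepsilon_k$ are i.i.d. $\mathcal{N}(0,\sigma_\varepsilon^2)$. If fewer selections occur, pad the sequence with fresh independent Gaussians; this does not affect the event of interest. Then
\[
\sum_{i\in S^\theta_t}\varepsilon_i=\sum_{k=1}^{n}\tilde\varepsilon_k \quad\text{with } n=|S^\theta_t|\le t.
\]

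Next fix $\theta$, the type of set ($S$ or $\hat S$), and a count $n\in\{1,\dots,T\}$. The Gaussian tail bound gives
\[
\mathbb{P}\Bigl(\Bigl|\sum_{k=1}^n\tilde\varepsilon_k\Bigr|>\sqrt{\xi n}\Bigr)\le 2\exp\!\bigl(-\xi/(2\sigma_\varepsilon^2)\bigr).
\]
Indexing the union bound by $n$ (rather than by $t$) avoids dependence on random times. With $\xi=\xi_n=2\sigma_\varepsilon^2\log(2|\Theta|\pi^2n^2/3\delta_B)$, the failure probability is $3\delta_B/(|\Theta|\pi^2 n^2)$. Summing over $n\ge1$ gives $\sum 1/n^2=\pi^2/6$, so this is $\delta_B/(2|\Theta|)$. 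Summing further over the two set types and the $|\Theta|$ candidates yields total failure probability $\delta_B$.

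Finally, on the good event, for any $t$ with $n=|S|\le t$ we have $\xi_n\le\xi_t$ because $\xi$ is increasing. Hence $\bigl|\sum_{i\in S}\varepsilon_i\bigr|\le\sqrt{\xi_n n}\le\sqrt{\xi_t|S|}$, and the case $|S|=0$ is trivial. The main obstacle is the measurability step, i.e., justifying that sampling at data-dependent times preserves the i.i.d. Gaussian law. The fallback is a self-normalized martingale bound: $M_t=\sum_{i\le t}\mathds{1}\{\theta_i=\theta\}\varepsilon_i$ is a martingale with predictable quadratic variation $\sigma_\varepsilon^2|S^\theta_t|$. Applying the exponential supermartingale $\exp(\lambda M_t-\lambda^2\sigma_\varepsilon^2|S^\theta_t|/2)$ for each fixed count level gives the same bound.
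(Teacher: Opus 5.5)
Your proposal is correct and follows essentially the same route as the paper. The paper's coupling, which pre-samples a hidden sequence $v_1,\dots,v_T$ and assigns $v_{|S|}$ to each expansion of $S$, is the same device as your re-indexing $\tilde\varepsilon_k=\varepsilon_{\tau_k}$ along predictable selection times; it is then followed by the same per-count tail bound with $\xi_n$, the $\sum_n 1/n^2=\pi^2/6$ summation, and the union bound over the $2|\Theta|$ pairs $(\theta,S)$, and your explicit use of $\xi_{|S|}\le\xi_t$ (since $|S|\le t$) properly justifies a step the paper writes as an equality.
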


Combining the noise and model-error bounds gives a threshold that a correctly specified candidate cannot exceed with high probability. We therefore remove the optimistic candidate $\theta_t$ whenever

\begin{equation}
\label{eq:he-cv-elim-u}
\Bigl|\,\sum\limits_{i \in S^{\theta_t}_t} \eta_i\,\Bigr|
\;>\;
\sqrt{\xi_t\,|S^{\theta_t}_t|}%_{\text{noise budget}}
+
\sum\limits_{i \in S^{\theta_t}_t}
\beta_i^{\theta_t}\,\sigma_{i-1}^{\theta_t}(x_i,w_i)%_{\text{model-error budget}},
\end{equation}
and, symmetrically, remove the pessimistic candidate $\vartheta_t$ if
\begin{equation}
\label{eq:he-cv-elim-l}
\Bigl|\,\sum\limits_{i \in \hat S^{\vartheta_t}_t} \hat\eta_i\,\Bigr|
\;>\;
\sqrt{\xi_t\,|\hat S^{\vartheta_t}_t|}
+
\sum\limits_{i \in \hat S^{\vartheta_t}_t}
\beta_i^{\vartheta_t}\,\sigma_{i-1}^{\vartheta_t}(x_i,w_i).
\end{equation}
With high probability, the correctly specified candidate $\theta^\star$ violates neither criterion and is therefore never eliminated. Conversely, a candidate that violates one of the criteria has accumulated residual errors that are incompatible with its own confidence model and the observed data. The two criteria are evaluated independently, so an iteration may eliminate the optimistic candidate, the pessimistic candidate, both, or neither; we provide the detailed procedure in Algorithm~\ref{alg:ouralg}.
\begin{algorithm}[t]
\caption{\our}
\label{alg:ouralg}
\begin{algorithmic}[1]
\Require Candidate set $\Theta$; risk level $\alpha$; confidence parameters
$\{\beta_t^\theta\}_{\theta\in\Theta}$ and $\{\xi_t\}$; horizon $T$;
initial data $\mathcal{D}_0$

\State $\Theta_0 \gets \Theta$;\quad
$S^\theta_0,\hat S^\theta_0 \gets \emptyset$ for all $\theta\in\Theta$

\For{$t=1,\ldots,T$}

    \State Fit a GP posterior for every $\theta\in\Theta_{t-1}$
    \State $(x_t,\theta_t)
    \in \arg\min\limits_{(x,\theta)\in\mathcal{X}\times\Theta_{t-1}}
    c_{l_{t-1}}^\theta(x;\alpha)$
    \Comment{\eqref{eq:he-cv-xt}}
    \State $\vartheta_t
    \in \arg\max\limits_{\theta\in\Theta_{t-1}}
    c_{u_{t-1}}^\theta(x_t;\alpha)$
    \Comment{\eqref{eq:he-cv-vt}}
    \State $w_t
    \in \arg\max\limits_{w\in\mathcal{W}}
    [
    u_{t-1}^{\vartheta_t}(x_t,w)
    -
    l_{t-1}^{\theta_t}(x_t,w)
    ]$
    \Comment{\eqref{eq:he-cv-lv}}
    \State Observe
    $y_t=f(x_t,w_t)+\varepsilon_t$
    \State $\mathcal{D}_t
    \gets
    \mathcal{D}_{t-1}\cup\{(x_t,w_t,y_t)\}$
    \State $\eta_t
    \gets
    y_t-\mu_{t-1}^{\theta_t}(x_t,w_t)$
    \State $\hat\eta_t
    \gets
    y_t-\mu_{t-1}^{\vartheta_t}(x_t,w_t)$
    \State Update
    $S^{\theta_t}_t\gets S^{\theta_t}_{t-1}\cup\{t\}$,
    $\hat S^{\vartheta_t}_t\gets \hat S^{\vartheta_t}_{t-1}\cup\{t\}$
    \State $\Theta_t\gets\Theta_{t-1}$
    \If{criterion \eqref{eq:he-cv-elim-u} holds}
        \State $\Theta_t\gets\Theta_t\setminus\{\theta_t\}$
    \EndIf
    \If{criterion \eqref{eq:he-cv-elim-l} holds}
        \State $\Theta_t\gets\Theta_t\setminus\{\vartheta_t\}$
    \EndIf
\EndFor
\State \Return $\hat{x} \in
\arg\min\limits_{x \in \{x_1,\ldots,x_T\}}
\max\limits_{\theta \in \Theta_T}\mathrm{CVaR}_\alpha[\mu_T^\theta(x, W)]$
\end{algorithmic}
\end{algorithm}

\subsection{Regret guarantee}
\label{sec:he-cv-regret}

We now show that the elimination and lacing mechanisms together provide a no-regret guarantee. The key step is to relate the CVaR regret incurred by the selected design to the cross-surrogate confidence gap at the environmental queries.

The gap $G_t(w)$ contains a mean-disagreement term and may therefore be negative, i.e., the optimistic and pessimistic bounds need not be pointwise ordered. We use the following CVaR difference inequality, which only requires an upper bound on the difference between the two random variables.

\begin{lemma}[CVaR shift]
\label{lem:cvar-shift}
For integrable $X, Y$ on a common probability space and $\alpha \in (0,1)$, $\mathrm{CVaR}_\alpha[X] - \mathrm{CVaR}_\alpha[Y] \le \operatorname*{ess\,sup}(X - Y)$.
\end{lemma}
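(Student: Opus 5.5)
The plan is to combine monotonicity and translation equivariance of CVaR. Both follow directly from the quantile definition $\mathrm{CVaR}_\alpha[Z]=\frac{1}{1-\alpha}\int_\alpha^1 \mathrm{VaR}_{\alpha'}[Z]\,\mathrm{d}\alpha'$ used in Section~\ref{sec:background}.

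First dispose of the trivial case. Set $c:=\operatorname*{ess\,sup}(X-Y)$. If $c=+\infty$ there is nothing to prove, since both CVaRs are finite for integrable variables. So assume $c<\infty$. Also $c>-\infty$: otherwise $X-Y$ would be almost surely below every real number, which is impossible for real-valued random variables. Hence $X\le Y+c$ almost surely.

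Second, establish the two properties of CVaR at the level of VaR.
\begin{itemize}
\item \emph{Monotonicity.} If $Z_1\le Z_2$ almost surely, then $\mathbb{P}(Z_1\le z)\ge \mathbb{P}(Z_2\le z)$ for every $z$. The set defining $\mathrm{VaR}_{\alpha'}[Z_1]$ therefore contains that of $Z_2$, so $\mathrm{VaR}_{\alpha'}[Z_1]\le \mathrm{VaR}_{\alpha'}[Z_2]$ for every $\alpha'$.
\item \emph{Translation equivariance.} Since $\mathbb{P}(Z+c\le z)=\mathbb{P}(Z\le z-c)$, we get $\mathrm{VaR}_{\alpha'}[Z+c]=\mathrm{VaR}_{\alpha'}[Z]+c$.
\end{itemize}
Integrating both facts over $\alpha'\in(\alpha,1)$ and dividing by $1-\alpha$ gives $\mathrm{CVaR}_\alpha[X]\le \mathrm{CVaR}_\alpha[Y+c]=\mathrm{CVaR}_\alpha[Y]+c$. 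Integrability of $X$ and $Y$ ensures the quantile integrals are finite, so the subtraction is legitimate and yields the claim.

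The main obstacle is only bookkeeping. We must justify that the integrals are finite, which holds because $\int_0^1|\mathrm{VaR}_{\alpha'}[Z]|\,\mathrm{d}\alpha'=\mathbb{E}|Z|<\infty$. We must also handle $c=\pm\infty$, as done in the first step.

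In the paper's application, $\mathbb{P}_W$ is supported on finitely many scenarios. There $\operatorname*{ess\,sup}$ is simply a maximum over scenarios of positive mass. Taking $X=u_{t-1}^{\vartheta_t}(x_t,W)$ and $Y=l_{t-1}^{\theta_t}(x_t,W)$ then gives $c_{u_{t-1}}^{\vartheta_t}(x_t;\alpha)-c_{l_{t-1}}^{\theta_t}(x_t;\alpha)\le\max_w G_t(w)=G_t(w_t)$. Combined with \eqref{eq:certified-gap}, this yields $r_t\le G_t(w_t)$.
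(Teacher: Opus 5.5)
Your proof is correct and is the same argument as the paper's: dispose of the case $\operatorname*{ess\,sup}(X-Y)=\infty$, use $X\le Y+c$ almost surely, and apply monotonicity and then translation invariance of CVaR. Beyond what the paper writes, you also derive these two properties from the quantile definition, rule out $c=-\infty$, and check that the integrals are finite; the paper takes these points for granted.
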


Lemma~\ref{lem:cvar-shift} follows from the monotonicity and translation invariance of CVaR; the proof is provided in the Appendix. Applying the lemma with $X=u_{t-1}^{\vartheta_t}(x_t,W)$ and $Y=l_{t-1}^{\theta_t}(x_t,W)$ yields $r_t
\leq
\operatorname*{ess\,sup}_{w\in\mathcal W}
(
u_{t-1}^{\vartheta_t}(x_t,w)
-
l_{t-1}^{\theta_t}(x_t,w)
)
=
G_t(w_t),$
where the last equality follows from the definition of the lacing value. Hence, the selected environmental query controls the instantaneous regret, which enables the following cumulative regret bound (the full proof is given in the Appendix).

\begin{theorem}[Cumulative regret of \our]
\label{thm:regret}
Assume $f \in \mathcal{H}_{k_{\theta^\star}}$ with $\|f\|_{\mathcal{H}_{k_{\theta^\star}}} \le B$ for some $\theta^\star \in \Theta$, $\varepsilon_t \sim \mathcal{N}(0,\sigma_\varepsilon^2)$ i.i.d., and $k_\theta \le 1$ for all $\theta \in \Theta$. Set $\beta_t^\theta = B + \sigma_\varepsilon(2(\gamma_{t-1}^\theta + 1 + \ln(2/\delta)))^{1/2}$ and $\xi_t = 2\sigma_\varepsilon^2 \log(4|\Theta|\pi^2 t^2/3\delta)$. Then with probability at least $1 - \delta$, the cumulative regret satisfies
\begin{equation*}
\label{eq:he-cv-regret}
R_T \;\le\; \mathcal{O}\!\bigl(\,|\Theta|\,B + |\Theta|\,\sqrt{\xi_T\,T} + \beta_T|\Theta|\,\sqrt{T\,\,\gamma_T}\,\bigr),
\end{equation*}
with $\beta_T := \max_\theta \beta_T^\theta$ and $\gamma_T := \max_\theta \gamma_T^\theta$.
\end{theorem}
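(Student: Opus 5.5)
We work on the intersection of two good events. The first is the event of the GP confidence bound for $\theta^\star$, namely $l_{t-1}^{\theta^\star}\le f\le u_{t-1}^{\theta^\star}$ for all $t$ and all $(x,w)$; with $\ln(2/\delta)$ in $\beta_t^{\theta^\star}$ it holds with probability $1-\delta/2$. The second is the event of Lemma~\ref{lem:noise-concentration} with $\delta_B=\delta/2$, which is where the stated $\xi_t$ comes from. A union bound gives probability at least $1-\delta$.

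On this event, $\theta^\star$ never triggers \eqref{eq:he-cv-elim-u} or \eqref{eq:he-cv-elim-l}. Its residual sum splits as $\sum\varepsilon_i+\sum(f-\mu^{\theta^\star}_{i-1})$, and these two pieces are bounded by $\sqrt{\xi_t|S|}$ and $\sum\beta_i\sigma_{i-1}$, respectively. Hence $\theta^\star\in\Theta_{t-1}$ for all $t$. The bounds preceding \eqref{eq:certified-gap} then hold, and with Lemma~\ref{lem:cvar-shift} we get $r_t\le G_t(w_t)$.

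Next we split $G_t(w_t)$ at the query point. Adding and subtracting $y_t$ and $f(x_t,w_t)$ gives
\[
G_t(w_t)=\bigl(y_t-\mu^{\theta_t}_{t-1}\bigr)-\bigl(y_t-\mu^{\vartheta_t}_{t-1}\bigr)+\beta_t^{\vartheta_t}\sigma^{\vartheta_t}_{t-1}+\beta_t^{\theta_t}\sigma^{\theta_t}_{t-1}
=\eta_t-\hat\eta_t+\beta_t^{\vartheta_t}\sigma^{\vartheta_t}_{t-1}+\beta_t^{\theta_t}\sigma^{\theta_t}_{t-1},
\]
with everything evaluated at $(x_t,w_t)$. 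Summing over $t$, we group the rounds by the candidate used. This gives $R_T\le\sum_{\theta}\bigl(\sum_{i\in S^\theta_T}\eta_i-\sum_{i\in\hat S^\theta_T}\hat\eta_i\bigr)+\sum_t(\beta\sigma\text{ terms})$.

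For each $\theta$, let $\tau$ be the last round in which $\theta$ was active as an optimistic candidate. Before $\tau$ the criterion \eqref{eq:he-cv-elim-u} failed, so
\[
\Bigl|\sum_{i\in S^\theta_{\tau-1}}\eta_i\Bigr|\le\sqrt{\xi_T|S^\theta_T|}+\sum_{i\in S^\theta_T}\beta_i^\theta\sigma^\theta_{i-1}.
\]
The single round $\tau$, possibly the one on which $\theta$ was eliminated, contributes one extra residual $|\eta_\tau|$. This residual is the source of the $|\Theta|B$ term. On the good event, $|\eta_\tau|\le|\varepsilon_\tau|+|f-\mu^\theta_{\tau-1}|$. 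I would bound $|f|\le B$ using $k\le1$, and bound $|\mu^\theta_{\tau-1}|$ by roughly $B$ plus a noise-dependent term; the latter is absorbed into $\sqrt{\xi_T}$ since $|\varepsilon_\tau|\le\sqrt{\xi_T}$ by the lemma with $|S|=1$. The same argument applies to $\hat\eta$ and \eqref{eq:he-cv-elim-l}.

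Summing over $\theta$ and applying Cauchy--Schwarz, we get $\sum_\theta\sqrt{\xi_T|S^\theta_T|}\le\sqrt{\xi_T|\Theta|T}\le|\Theta|\sqrt{\xi_T T}$. The remaining work is to bound the $\beta\sigma$ sums. For each fixed $\theta$, the points in $S^\theta_T$ are a subsequence of the queries, and the posterior of $\theta$ at step $i-1$ is conditioned on all previous queries, which form a superset. Since conditioning on more data only decreases the posterior variance, the standard elliptical potential argument applies to the subsequence, giving
\[
\sum_{i\in S^\theta_T}\sigma^\theta_{i-1}(x_i,w_i)\le\sqrt{|S^\theta_T|\cdot C\gamma_T^\theta}.
\]
Summing over $\theta$ and over both roles then yields $\mathcal{O}(\beta_T|\Theta|\sqrt{T\gamma_T})$, using monotonicity of $\beta_t$.

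I expect the main obstacle to be this subsequence information-gain step. The variance-decrease monotonicity has to be stated carefully so that the variance at $i-1$ is bounded by the variance given only the earlier points of $S^\theta$; the usual $\sum\sigma^2\le C\gamma$ bound then applies. A second delicate point is bounding the final unchecked residual by $\mathcal{O}(B+\sqrt{\xi_T})$ for a possibly misspecified $\theta$. If a clean bound on $\mu^\theta$ fails, I would instead bound the per-round regret of that single round directly by $2B$, since $|f|\le B$ gives $\mathrm{CVaR}$ differences at most $2B$. Each candidate contributes at most two such rounds, one per role, so these exceptional rounds cost $\mathcal{O}(|\Theta|B)$ in total.
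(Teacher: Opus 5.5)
Your good events, the preservation of $\theta^\star$, the bound $r_t\le G_t(w_t)=\eta_t-\hat\eta_t+\beta_t^{\theta_t}\sigma^{\theta_t}_{t-1}+\beta_t^{\vartheta_t}\sigma^{\vartheta_t}_{t-1}$, the grouping by streams, Cauchy--Schwarz and the subsequence information-gain step all agree with the paper. The gap is the step you flagged: the residual at the round $\tau$ on which \eqref{eq:he-cv-elim-u} eliminates $\theta$. Your primary bound for it does not hold. For a misspecified $\theta$, nothing ties $\mu^\theta_{\tau-1}$ to $B$. The generic bound is $|\mu^\theta_t(z)|\le\|y_{1:t}\|_2/\sigma_\varepsilon$, which grows like $\sqrt t$. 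For example, a linear kernel fitted to $t$ observations of $f\equiv B$ placed at norm $\sigma_\varepsilon/\sqrt t$ gives, up to noise, $\mu=\sqrt t\,B/(2\sigma_\varepsilon)$ at a unit-norm point. Controlling $\|\varepsilon_{1:t}\|_2$ would also need an event outside A and B. Likewise, Lemma~\ref{lem:noise-concentration} only controls prefix sums of each stream. It gives $|\varepsilon_\tau|\le\sqrt{\xi_T}$ only when $\tau$ opens a stream; in general you only get $2\sqrt{\xi_T T}$ from a difference of two prefixes.

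Your fallback $r_\tau\le 2B$ is the right move, but it removes round $\tau$ from \emph{both} residual sums. In $\theta$'s optimistic stream $\tau$ is the last element, so a prefix remains and stays controlled. But $\hat\eta_\tau$ also drops out of the pessimistic stream $\hat S^{\vartheta_\tau}_T$ of the other candidate, which in general survives and is selected again. The remaining sum over $\hat S^{\vartheta_\tau}_T\setminus\{\tau\}$ then has an interior hole. Criterion \eqref{eq:he-cv-elim-l} only certifies prefix sums, so it says nothing about this sum; the same happens symmetrically when \eqref{eq:he-cv-elim-l} fires. Your $\mathcal{O}(|\Theta|B)$ charge covers the regret of these rounds, not this hole.

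The missing ingredient is the paper's bystander bound. Write $\hat S^{\vartheta_\tau}_T=\{s_1<s_2<\cdots\}$ with $\tau=s_j$ and $\hat P_j=\sum_{k\le j}\hat\eta_{s_k}$. If the opposite criterion also fired at $\tau$, then $\tau$ ends that stream too and no hole arises. Otherwise the criterion did not fire at $s_j$, nor at $s_{j-1}$ (since $\vartheta_\tau$ is still alive), so both $\hat P_j$ and $\hat P_{j-1}$ lie below the threshold. Hence $|\hat\eta_\tau|=|\hat P_j-\hat P_{j-1}|\le 2\hat\Phi^{\vartheta_\tau}_T\le 2\bigl(\sqrt{\xi_T T}+\beta_T\sqrt{C\gamma_T T}\bigr)$.

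Each of the at most $|\Theta|-1$ critical rounds creates at most one such hole. This adds $\mathcal{O}\bigl(|\Theta|(\sqrt{\xi_T T}+\beta_T\sqrt{\gamma_T T})\bigr)$, which is exactly where the linear $|\Theta|$ prefactor in the statement comes from. Without this term your accounting would give only $\sqrt{|\Theta|}$ there, a rate the paper explicitly leaves open.
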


For kernels satisfying $\gamma_T=o(\sqrt{T})$ and a fixed finite candidate set, the bound is sublinear in $T$. Thus, \our achieves no-regret learning without knowing the true kernel hyperparameters. The factor $|\Theta|$ arises from the two-sided elimination procedure: unlike single-criterion elimination, the optimistic and pessimistic residual sequences can be interrupted by eliminations of the opposite candidate, requiring an additional charge for such gaps in the residual history.

Note that the guarantee concerns the sequence of queried designs $\{x_t\}_{t=1}^{T}$. Sublinear cumulative regret implies that the best queried design satisfies $\min_{t\leq T} r_t \leq R_T/T \rightarrow 0,$
i.e., at least one evaluated design achieves vanishing instantaneous regret.

\paragraph{Robust recommendation.}
After $T$ iterations the algorithm must commit to a single design. We return the
\emph{$\Gamma$-maximin} design over the surviving candidate set, i.e., the design
that minimizes the worst-case posterior mean CVaR across the remaining models~\cite{seidenfeld-synthese04a}:
\begin{equation}
\hat{x}\in
\arg\min_{x\in\{x_1,\ldots,x_T\}}
\max_{\theta\in\Theta_T}
\mathrm{CVaR}_\alpha[\mu_T^\theta(x,W)] .
\end{equation}

Thus, unlike the acquisition rule, which selects the most optimistic candidate, the final recommendation is conservative and accounts for remaining model uncertainty. It selects the explored design with the smallest worst-case posterior mean CVaR among all surviving candidates.

\section{Experiments}\label{sec:experiments}
  
We empirically evaluate our algorithm across a range of synthetic and real-world benchmarks to assess its ability to handle surrogate uncertainty in CVaR Bayesian optimization. In particular, we conduct experiments to answer the following research questions:
\begin{enumerate}[label=\textbf{RQ\arabic*.}, leftmargin=*, itemsep=2pt, topsep=4pt, parsep=0pt]
    \item If the assumptions of Theorem~\ref{thm:regret} hold, does \our recover the performance of an oracle CV-UCB using the correct surrogate?
    \item When candidates fit the bulk well but oversmooth the tail, can \our identify and overcome this failure mode of likelihood-based selection and averaging?
    \item Does \our remain effective on real-world transfer tasks, where we can not guarantee that the correct model is in the pool?
    \item How does \our compare to fitting a single surrogate on the target by MLE without transfer?
\end{enumerate}

To answer RQ1 and RQ2, we construct $\Theta$ synthetically from GPs with different lengthscales over $w$. To answer RQ3 and RQ4, we consider real-world optimization tasks in a transfer BO setting. In practice, many optimization problems are solved repeatedly on related tasks, so previously collected observations are often available and can be used to fit surrogate models offline before transferring them to a new target task~\citep{feurer-arxiv18a}. We therefore fit one surrogate per source task and transfer the resulting models to a held-out target. The transferred surrogates naturally form the candidate set $\Theta$. Unlike the synthetic construction, we here cannot guarantee that a correctly specified surrogate is in the pool, so the real-world experiments test \our beyond the assumptions of Theorem~\ref{thm:regret}.

For RQ2-RQ3, all compared methods receive the same transferred candidate pool. We compare against the following CVaR-BO baselines: CV-UCB, CV-TS~\citep{nguyenquoc-neurips21a}, and $\rho\text{-KG}^{\mathrm{apx}}$~\citep{cakmak-nips20a}. To ensure a fair comparison in the transfer setting, each of theses baseline selects from the transferred pool at every iteration the surrogate with the highest marginal likelihood on the target and then applies its respective acquisition function. As an additional baseline, we consider BMA-UCB, which aggregates the transferred surrogates by marginal-likelihood weighting and applies CV-UCB to the resulting predictive mixture. For RQ4, we study the value of the transferred candidate pool by comparing \our against single-task surrogates fitted from scratch on the target by MLE, which corresponds to the original setting of CV-UCB, CV-TS, and $\rho\text{-KG}^{\mathrm{apx}}$. This comparison isolates whether access to related source tasks provides an advantage over learning a surrogate solely from target observations. Shared hyperparameters ($\beta = 3$), initial designs, and GP conditioning are matched across methods, and we report the CVaR regret of the final recommended design. All methods are implemented in BoTorch~\citep{balandat-neurips20a}; full implementation details, additional results, and ablation studies are provided in the Appendix.

\begin{figure}[t]
\centering
\includegraphics[width=0.98\linewidth]{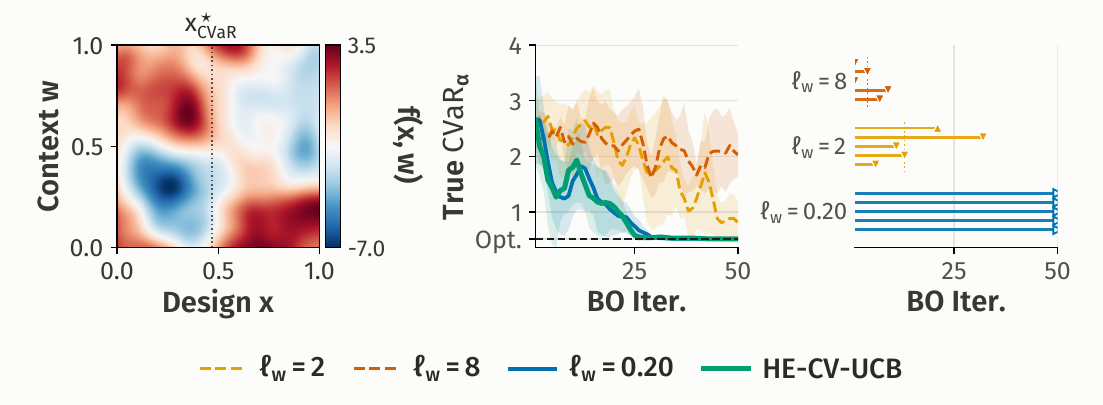}
\caption{Candidate pool of three GPs: the one generating the objective ($\ell_w = 0.20$, oracle) and two oversmoothers ($\ell_w \in \{2.0, 8.0\}$). \textbf{Left:} the GP sample $f(x,w)$. \textbf{Middle:} simple regret of the recommended design, averaged over 5 seeds. \textbf{Right:} \our eliminates both oversmoothers after a median of 4 and 13 iterations, respectively.}
    \label{fig:algo_validation}
\end{figure}

\subsection{Toy example (RQ1)}
To answer RQ1, we sample a two-dimensional test function from a GP with lengthscales $\ell_x = \ell_w = 0.2$. We provide \our with a pool of three candidates: the data-generating GP (oracle) and two misspecified GPs
with the same design-space lengthscale $\ell_x = 0.2$ but overly large environmental lengthscales ($\ell_w \in \{2.0, 8.0\}$). The pool thus contains the correctly specified data-generating GP, allowing us to investigate whether \our recovers oracle performance without knowing which
candidate generated the objective.

We run \our on this pool and, and compare against CV-UCB on each candidate separately: the run on the oracle is the performance ceiling, and the runs on the two oversmoothers show the failure mode. As Figure~\ref{fig:algo_validation} shows, \our eliminates both oversmoothed candidates while retaining the oracle across all seeds, and the simple regret of its recommended design converges to the oracle's. This confirms that \our can discard tail-blind surrogates and recover oracle performance when the data-generating GP is included in the candidate pool.

\subsection{Synthetic experiment (RQ2)}
To answer RQ2, we construct a synthetic objective that exposes the failure mode of likelihood-based model evaluation. We use the spike objective from the left panel of Figure~\ref{fig:fig1}, where a narrow region of the environmental space determines the CVaR-optimal design. The candidate pool consists of GPs whose lengthscales over $w$ bracket the true spike width, and the design-space lengthscale is fixed to $\ell_x = 0.1$ for all of them. Misspecification is thus confined to the dimension that drives the tail, which is precisely where marginal likelihood is blind: an oversmoothed surrogate fits the bulk well and attains high likelihood while missing the rare events that dominate the CVaR. The baselines CV-UCB, CV-TS, and $\rho$-KG$^{\mathrm{apx}}$ select by likelihood and BMA-UCB averages by likelihood weights, so all baselines inherit this blind spot.

As can be see in Figure~\ref{fig:exp_spike}, the baselines increasingly favor regions with poor CVaR performance, BMA-UCB plateaus at a CVaR regret around $1$, and \our reaches near-zero regret. The difference is in how \our chooses queries: rather than scoring global fit, it evaluates where the surviving candidates disagree, and since they already agree across the bulk, this disagreement concentrates in the tail. The lacing rule thus acquires exactly the observations that separate a spike-resolving surrogate from an oversmoothed one, which is why \our succeeds where likelihood-based selection and averaging fail, answering RQ2.

\begin{figure}[t]
    \centering
    \includegraphics[width=0.95\linewidth]{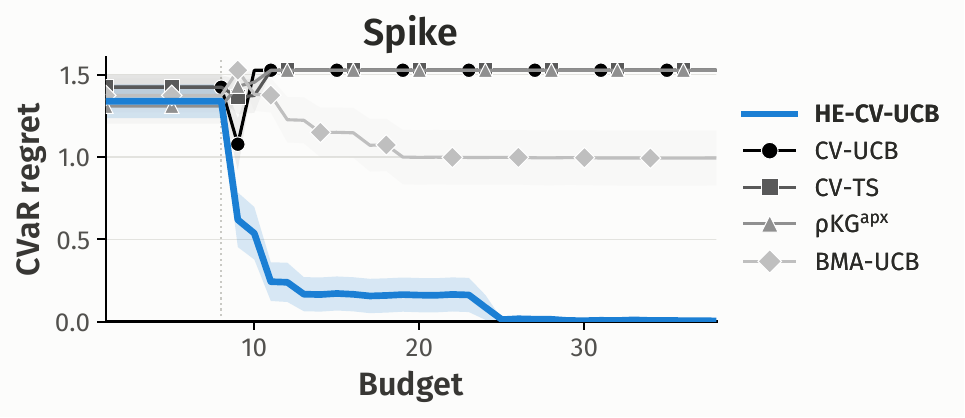}
    \caption{CVaR regret on the synthetic \textbf{spike function} averaged over 10 seeds. The baselines select or average high-likelihood source GPs that are tail-blind, \our removes the oversmoothers and reaches zero regret.}
    \label{fig:exp_spike}
\end{figure}

\begin{figure*}[t]
    \centering
    \includegraphics[width=.9\linewidth,trim={0 0.8cm 0 0.7cm},clip]{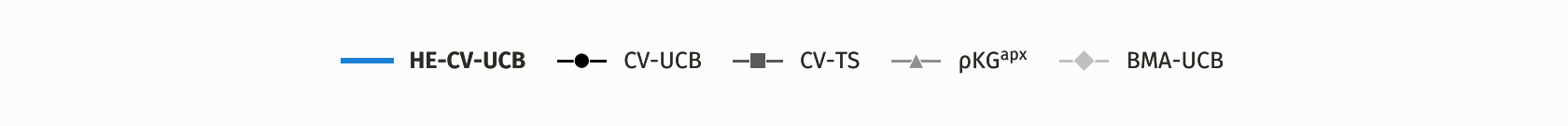}
\begin{subfigure}{0.32\linewidth}
    \centering
    \includegraphics[width=\linewidth]{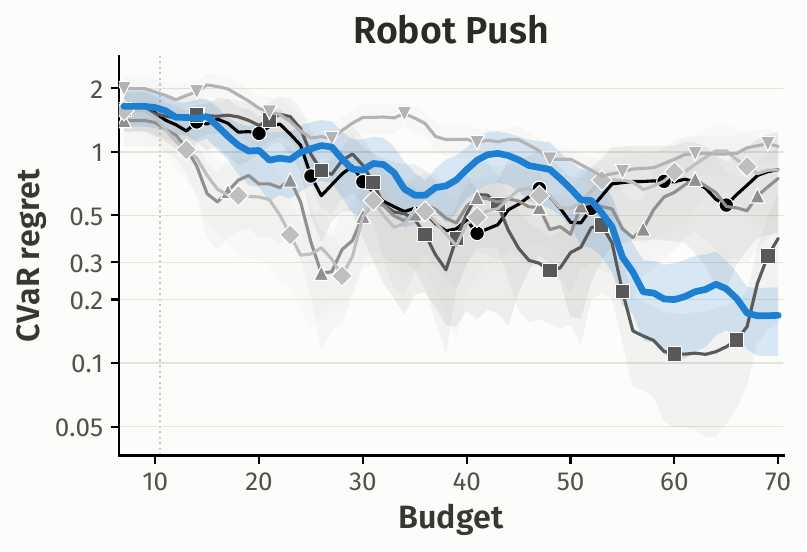}
\end{subfigure}
\hfill
\begin{subfigure}{0.32\linewidth}
    \centering
    \includegraphics[width=\linewidth]{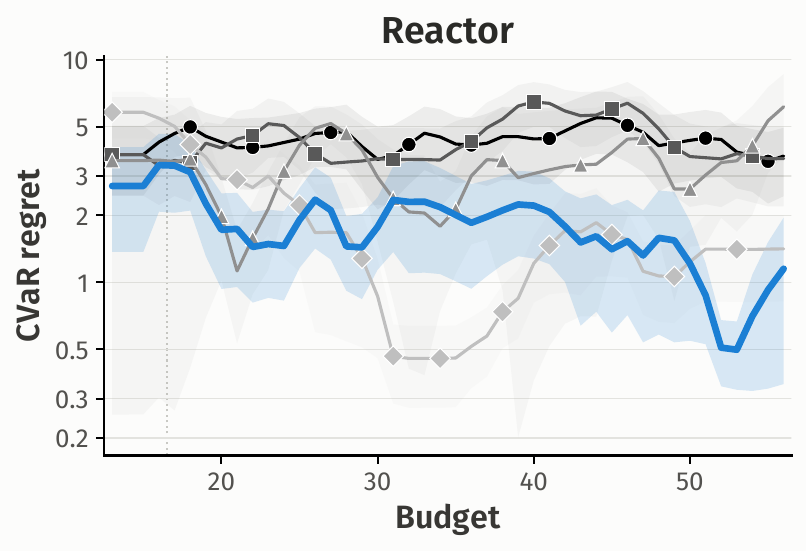}
\end{subfigure}
\hfill
\begin{subfigure}{0.32\linewidth}
    \centering
    \includegraphics[width=\linewidth]{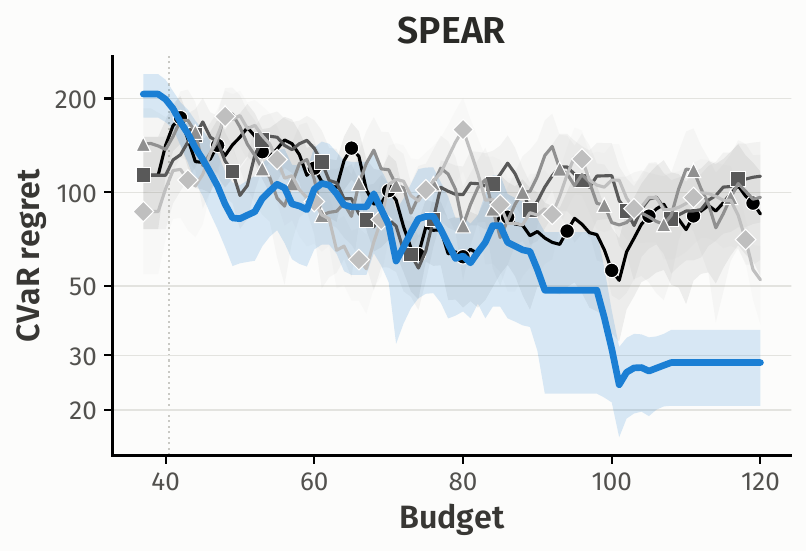}
\end{subfigure}
    \caption{CVaR regret of the recommended design vs.\ budget on robot pushing \textbf{(left)}, the exothermic batch reactor \textbf{(middle)}, and SAT solver configuration \textbf{(right)}. \our matches the best baselines on robot pushing and reaches substantially lower CVaR regret on the reactor and SAT. Mean $\pm$ 1 SEM over 10 seeds.}
    \label{fig:exp_realworld}
\end{figure*}

\subsection{Real-world experiments (RQ3)}

We now want to investigate whether \our stays effective on real world tasks. On each benchmark we fix a target task and derive its sources by varying a single interpretable factor of the problem, fitting one GP per source offline. The three benchmarks span the misspecification spectrum, from mild (a small goal rotation) to severe (a large change of scale).
 
\paragraph{Robot push.}
Taken from \citet{nguyenquoc-neurips21a}, this task has a robot pushing an object toward a fixed goal by choosing its position and push duration $x \in \mathcal{X}$ ($d_x = 3$). An uncontrollable perturbation $W$ ($d_W = 2$) displaces the robot's start location, and the loss is the object's residual distance to the goal. Small displacements barely deflect the object while large ones cause it to miss badly, motivating $\mathrm{CVaR}_\alpha$.

We construct $K = 8$ source tasks by displacing the fixed goal to different positions. The push dynamics are identical across all tasks, so a source surrogate captures the correct dynamics but places the optimum in the wrong location. This is the \emph{benign} end of the spectrum: every source models the target reasonably, so committing to the "wrong" one costs little, and the tail-blindness that hurts likelihood-based baselines elsewhere is not exposed here. As can be seen in Figure~\ref{fig:exp_realworld} (lef), CV-UCB, BMA-UCB and $\rho\text{-KG}^{\mathrm{apx}}$ perform similarly. \our matches the performance of the strongest baseline, CV-TS, while exhibiting a more consistent decrease in CVaR regret over the optimization horizon.

\begin{figure}[t]
\centering
    \includegraphics[width=\linewidth,trim={0.5cm 0.3cm 0.5cm 0.3cm},clip]{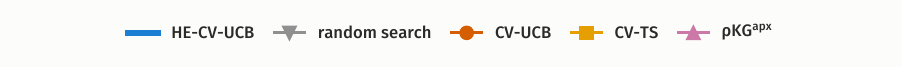}
\begin{subfigure}{0.32\linewidth}
    \centering
    \includegraphics[width=\linewidth]{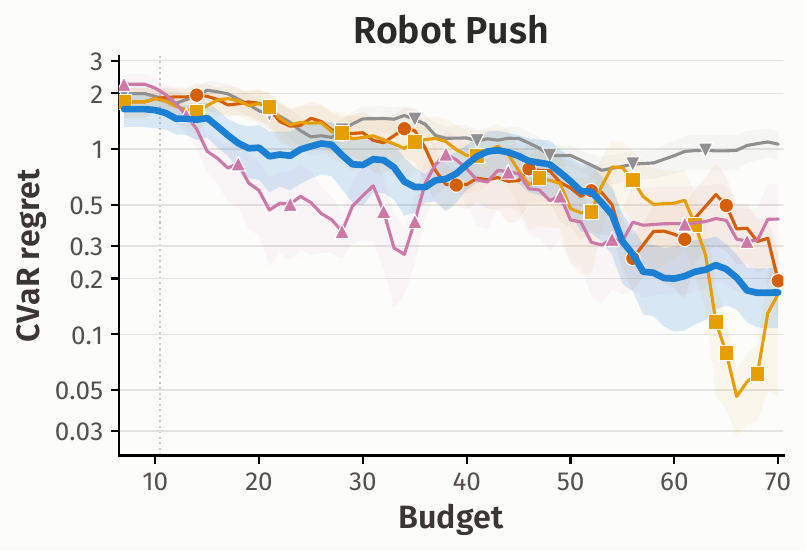}
\end{subfigure}
\hfill
\begin{subfigure}{0.32\linewidth}
    \centering
    \includegraphics[width=\linewidth]{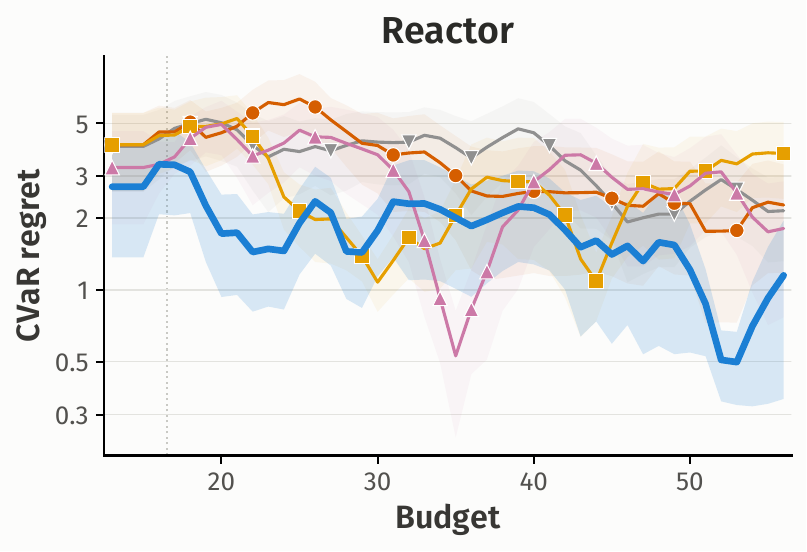}
\end{subfigure}
\hfill
\begin{subfigure}{0.32\linewidth}
    \centering
    \includegraphics[width=\linewidth]{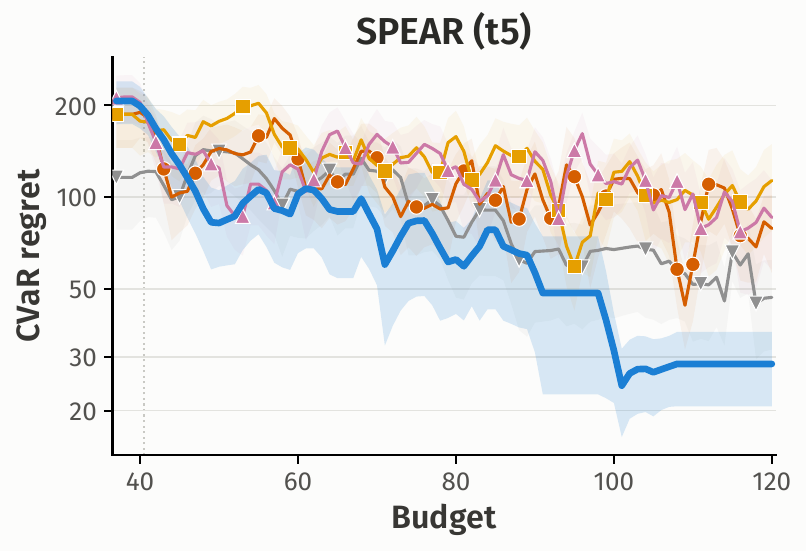}
\end{subfigure}
    \caption{CVaR regret comparing to from-scratch single-task models (no transfer). Mean $\pm$ 1 SEM over 10 seeds.}
    \label{fig:exp_mle}
\end{figure}

\paragraph{Exothermic batch reactor.}
We tune the operating conditions of a chemical reactor whose reaction releases heat~\citep{misicko-iecr26a}. The decision $x \in \mathcal{X}$ ($d_x = 3$) is an operating recipe (reaction rate, reactant concentration, cooling intensity), and an uncontrollable disturbance $W$ ($d_W = 3$) captures what varies from run to run: excess heat from impurities, degraded cooling, and initial-temperature uncertainty. The loss is the operating cost plus a large penalty whenever the peak temperature exceeds a safety threshold. Because the cheapest recipes are also those most likely to incur this rare but catastrophic failure, minimizing average cost can be unsafe, whereas $\mathrm{CVaR}_\alpha$ explicitly accounts for this tail risk.
We derive the sources by varying the reactor size while keeping the chemistry fixed, mirroring industrial scale-up from small bench reactors to an unseen production vessel. This induces systematic misspecification: as volume grows faster than surface area, cooling becomes less effective, so models fit on smaller reactors underestimate the target's failure tail.

This represents the severe end of the misspecification spectrum. \our attains the lowest final CVaR regret (Figure~\ref{fig:exp_realworld}, middle). BMA-UCB is competitive early on, but its likelihood weighting favors source models that underestimate tail risk, causing performance to deteriorate later. The remaining baselines commit to individual source models that fail to capture the target's failure region.

\paragraph{Algorithm configuration.}
We configure the \textsc{Spear} SAT solver ($d_x = 26$)~\citep{babic-sd07a} on the SWV software-verification benchmark from AClib~\citep{hutter-lion14a}, using its pre-computed runtime matrix over 1000 configurations and 604 instances. The decision $x$ is a solver configuration, the environment $W$ is a problem instance described by its SATzilla features, and the loss is the configured solver's runtime, right-censored at 300s. Runtimes are heavy-tailed: a configuration solves most instances quickly but can be orders of magnitude slower on a few hard ones, so the average hides exactly the failures $\mathrm{CVaR}_\alpha$ targets. We derive the sources by varying the instance population: runtime data from related instance distributions is available, while the target distribution is unseen. We cluster the instances into $K=8$ structural sub-families, hold one out as the target, and use each remaining family as a source task, requiring the solver to be configured for an unseen instance distribution.

As shown in Figure~\ref{fig:exp_realworld} (right), \our achieves the lowest average CVaR regret. The competing methods perform similarly and remain substantially worse, with final CVaR regrets above 50. This gap reflects the difficulty of selecting a single source model: likelihood-based selection can favor models that perform well on the majority of instances while failing to capture the rare hard instances that dominate the CVaR objective. By retaining multiple plausible surrogates and querying where they disagree, \our avoids committing prematurely to a source model that underestimates the failure tail.

\subsection{Value of transfer models (RQ4).}
To quantify the benefit of transferred candidate models, we compare \our against single-task surrogates fitted from scratch on the target by MLE, corresponding to the original setting of CV-UCB, CV-TS, and $\rho\text{-KG}^{\mathrm{apx}}$ together with random search. This comparison isolates whether access to related source tasks improves CVaR optimization over learning a surrogate solely from target observations. As can be seen in Figure~\ref{fig:exp_mle}, \our reaches substantially lower final CVaR regret than the best from-scratch model on the reactor and SAT tasks. On the robot pushing task, by contrast, CV-TS performs best. These results show that the benefit of transfer depends on how difficult it is to identify the tail-relevant structure of the target task: when the tail is easy to resolve from target observations alone, a single surrogate can suffice, whereas transfer becomes valuable when the tail behavior is difficult to infer from limited target data.

\section{Conclusion}
Optimizing CVaR requires accurately estimating the tail of the outcome distribution. When the GP surrogate is misspecified, tail estimates can be severely distorted, leading to poor risk-sensitive decisions. Existing CVaR Bayesian optimization algorithms typically assume that the correct surrogate model and its hyperparameters are known, an assumption that rarely holds in practice.

We addressed this limitation with \our, the first no-regret algorithm for CVaR Bayesian optimization with unknown kernel hyperparameters. Instead of committing to a single surrogate, \our maintains a finite set of plausible GP models, eliminates models that become inconsistent with the observations, queries where the surviving models disagree, and returns a decision that is robust to the remaining model uncertainty. We proved a cumulative regret bound matching the oracle rate of CV-UCB up to a factor linear in the candidate-set size, and demonstrated empirically that \our achieves significant improvements over competing methods while never being significantly outperformed.

\paragraph{Limitations and future work.}
Our theoretical guarantee assumes that a correctly specified surrogate is contained in the candidate set. The real-world experiments suggest that \our remains effective beyond this setting, but deriving regret guarantees that degrade with the misspecification of the best candidate, in the spirit of \citet{bogunovic-neurips21a}, remains an open direction. Our regret bound also incurs a factor $|\Theta|$ due to the two-sided elimination procedure, compared to the $\sqrt{|\Theta|}$ dependence obtained for single-criterion elimination; whether this gap is fundamental remains open. Finally, the proposed elimination framework is independent of the acquisition strategy, suggesting extensions to other risk-aware Bayesian optimization methods such as Thompson sampling and knowledge-gradient approaches.

\newpage
\bibliographystyle{plainnat}
\bibliography{strings,references,lib,proc}

\clearpage
\appendix

\begin{center}
    {\LARGE\bfseries Appendix}
\end{center}
\vspace{1em}

\renewcommand{\thesection}{\Alph{section}}
\renewcommand{\thesubsection}{\thesection.\arabic{subsection}}

\section{Proofs}
\label{app:proofs}

In this section we derive a regret bound for our proposed algorithm. The proof of Theorem~\ref{thm:regret} conditions on two probabilistic events that hold jointly with high probability: a GP confidence band for the true hyperparameter $\theta^\star$ (Lemma~\ref{lem:gp-confidence}), and a noise concentration bound on the side-stratified residual sums (Lemma~\ref{lem:noise-concentration}). We proceed in five steps: $\theta^\star$ is never eliminated (Step~1); the per-round regret is bounded by the band gap at the query (Step~2); the iterations at which eliminations fire contribute at most an additive constant (Step~3); the residual sums over the remaining iterations are controlled by the elimination thresholds themselves (Step~4); and the pieces are assembled (Step~5).

\paragraph{Lemma~\ref{lem:cvar-shift} (recalled).}
For integrable random variables $X,Y$ on a common probability space and
$\alpha\in(0,1)$,
\[
\mathrm{CVaR}_{\alpha}[X]-\mathrm{CVaR}_{\alpha}[Y]
\leq
\operatorname*{ess\,sup}(X-Y).
\]

\begin{proof}
Let $\delta=\operatorname*{ess\,sup}(X-Y)$. If $\delta=\infty$, the result is
trivial. Otherwise, $X\leq Y+\delta$ almost surely. By monotonicity of CVaR,
\[
\mathrm{CVaR}_{\alpha}[X]
\leq
\mathrm{CVaR}_{\alpha}[Y+\delta].
\]
Using translation invariance of CVaR,
\[
\mathrm{CVaR}_{\alpha}[Y+\delta]
=
\mathrm{CVaR}_{\alpha}[Y]+\delta,
\]
which proves the claim.
\end{proof}

\begin{lemma}[GP confidence band, \citealp{chowdhury-icml17a}]
\label{lem:gp-confidence}
Let $f \in \mathcal{H}_k$ with $\|f\|_{\mathcal{H}_k} \le B$ for a kernel $k \le 1$, and let $\varepsilon_t$ be conditionally $\sigma_\varepsilon$-sub-Gaussian. Let $\mu_{t-1}, \sigma_{t-1}$ denote the posterior mean and standard deviation of a GP with kernel $k$ and noise variance $\sigma_\varepsilon^2$ after $t-1$ queries, and set $\beta_t = B + \sigma_\varepsilon\sqrt{2(\gamma_{t-1} + 1 + \ln(1/\delta))}$, where $\gamma_{t-1}$ is the maximum information gain of the kernel after $t-1$ queries. Then with probability at least $1-\delta$,
\begin{equation*}
\bigl|\,f(x) - \mu_{t-1}(x)\,\bigr| \;\le\; {\beta_t}\,\sigma_{t-1}(x) \qquad \forall x \in \mathcal{X},\; t \ge 1.
\end{equation*}
\end{lemma}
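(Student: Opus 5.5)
The plan is to follow the self-normalized martingale argument of \citet{chowdhury-icml17a}. The key feature is that the random event we control concerns only the noise vector and not the query point. As a result, uniformity over all $x\in\mathcal{X}$ comes for free, and uniformity over $t\ge1$ comes from a stopping-time (method-of-mixtures) argument.

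Fix $t$ and write $K_{t-1}$ for the kernel matrix on the first $t-1$ queries, $k_{t-1}(x)$ for the vector of kernel evaluations, $f_{1:t-1}$ for the true function values, and $\varepsilon_{1:t-1}$ for the noise. Set $\lambda=\sigma_\varepsilon^2$; with a slight regularization $\lambda=\sigma_\varepsilon^2(1+\eta)$, $\eta=1/t$, as in the original, this is what produces the ``$+1$''. The posterior mean is $\mu_{t-1}(x)=k_{t-1}(x)^\top(K_{t-1}+\lambda I)^{-1}(f_{1:t-1}+\varepsilon_{1:t-1})$. I would then split
\[
f(x)-\mu_{t-1}(x)=\bigl[f(x)-k_{t-1}(x)^\top(K_{t-1}+\lambda I)^{-1}f_{1:t-1}\bigr]-k_{t-1}(x)^\top(K_{t-1}+\lambda I)^{-1}\varepsilon_{1:t-1}.
\]

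\textbf{Deterministic bias term.} Use the reproducing property $f(x)=\langle f,k(x,\cdot)\rangle_{\mathcal{H}_k}$ and the feature-map identity
\[
k(x,\cdot)-\Phi^\top(K+\lambda I)^{-1}k(x)=\lambda(\Phi^\top\Phi+\lambda I)^{-1}k(x,\cdot).
\]
Cauchy--Schwarz in $\mathcal{H}_k$ then bounds this term by $\|f\|_{\mathcal{H}_k}\,\sigma_{t-1}(x)\le B\,\sigma_{t-1}(x)$. This step is routine linear algebra.

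\textbf{Noise term.} By Cauchy--Schwarz in the weighted norm induced by $(\Phi^\top\Phi+\lambda I)^{-1}$, the noise term is at most $\sigma_{t-1}(x)$ times the self-normalized quantity
\[
\bigl(\varepsilon_{1:t-1}^\top\bigl((K_{t-1}+\lambda I)^{-1}K_{t-1}\bigr)\varepsilon_{1:t-1}\bigr)^{1/2}\big/\sigma_\varepsilon,
\]
up to the regularization constants. This quantity does not depend on $x$. I would bound it with the RKHS version of the Abbasi-Yadkori self-normalized inequality: with probability at least $1-\delta$, simultaneously for all $t$,
\[
\varepsilon^\top\bigl((K+\eta I)^{-1}+I\bigr)^{-1}\varepsilon\le 2\sigma_\varepsilon^2\log\Bigl(\sqrt{\det((1+\eta)I+K)}/\delta\Bigr).
\]
This follows by forming the supermartingale
\[
M_t=\exp\bigl(\textstyle\sum_s\varepsilon_s g(x_s)/\sigma_\varepsilon-\tfrac12\sum_s g(x_s)^2\bigr)
\]
for functions $g$, using conditional sub-Gaussianity, integrating $g$ against a GP prior (method of mixtures), and applying Ville's maximal inequality to get a bound valid at every $t$ via a stopping-time argument.

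\textbf{Information gain and assembly.} Finally, $\tfrac12\log\det(I+\sigma_\varepsilon^{-2}K_{t-1})\le\gamma_{t-1}$, and the regularization $\eta=1/t$ contributes at most an additive $1$, since $\log\det$ changes by at most $\mathrm{tr}(K)\eta/(1+\eta)\le 1$ when $k\le1$. This yields a noise term of at most $\sigma_\varepsilon\sqrt{2(\gamma_{t-1}+1+\ln(1/\delta))}\,\sigma_{t-1}(x)$. Adding the bias bound gives $\beta_t\sigma_{t-1}(x)$ on the same event for all $x$ and $t$.

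\textbf{Main obstacle.} The delicate step is the anytime self-normalized inequality in a possibly infinite-dimensional RKHS. Handling it requires care with the mixture over functions (done via the finite-dimensional Gaussian marginal on the query points), and with matching the regularization used in the GP posterior to that in the martingale bound so that the constants come out exactly as stated.
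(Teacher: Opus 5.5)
Your architecture (bias/noise split, Cauchy--Schwarz in the $(\Phi^\top\Phi+\lambda I)^{-1}$-norm, a Gaussian-mixture supermartingale with Ville's inequality, then the information gain) is the argument of \citet{chowdhury-icml17a}, which the paper cites instead of proving. However, the constants do not assemble into the claimed $\beta_t$, and this is exactly where your ``main obstacle'' bites. With $\lambda=\sigma_\varepsilon^2$ your noise bound reads $\sigma_{t-1}(x)\,\sigma_\varepsilon^{-1}\bigl(\varepsilon_{1:t-1}^\top K_{t-1}(K_{t-1}+\lambda I)^{-1}\varepsilon_{1:t-1}\bigr)^{1/2}$. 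The self-normalized inequality bounds the square root by $\sigma_\varepsilon\sqrt{2(\gamma_{t-1}+\ln(1/\delta))}$. (The version you quote, with $\det((1+\eta)I+K)$, is the unit-regularizer form; it controls your quadratic form only after rescaling the kernel to $K/\sigma_\varepsilon^2$.) The two factors of $\sigma_\varepsilon$ cancel, so your argument gives $\beta_t=B+\sqrt{2(\gamma_{t-1}+\ln(1/\delta))}$, not $B+\sigma_\varepsilon\sqrt{\cdots}$. Chowdhury and Gopalan obtain the noise level as a prefactor only because their posterior uses the regularizer $\lambda=1+\eta$ irrespective of the noise level, i.e.\ it is \emph{not} the GP with noise variance $\sigma_\varepsilon^2$.

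Your bound implies the stated one when $\sigma_\varepsilon\ge1$. For $\sigma_\varepsilon<1$ no argument can close the gap, because the statement is then false. Take $f\equiv0$, Gaussian noise, a single query $x_1$ with $k(x_1,x_1)=1$, and $t=2$, $x=x_1$. Then $\mu_1(x_1)=\varepsilon_1/(1+\sigma_\varepsilon^2)$ and $\sigma_1(x_1)=\sigma_\varepsilon/\sqrt{1+\sigma_\varepsilon^2}$. So the normalized error is $|Z|/\sqrt{1+\sigma_\varepsilon^2}$ with $Z=\varepsilon_1/\sigma_\varepsilon$ standard normal, whereas $\beta_2=B+\sigma_\varepsilon\sqrt{\ln(1+\sigma_\varepsilon^{-2})+2+2\ln(1/\delta)}\to B$ as $\sigma_\varepsilon\to0$. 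For instance, $B=1$, $\delta=0.1$, $\sigma_\varepsilon=0.1$ gives $\beta_2\approx1.33$ and a failure probability of about $0.18>\delta$.

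A second, repairable, flaw is the regularization. The original fixes $\eta=2/T$, so the ``$+1$'' is $\tfrac{t}{2}\ln(1+\eta)\le1$ for $t\le T$; it does not use $\eta=1/t$. With $\eta$ depending on $t$, the mixing distribution changes from round to round and the mixture is no longer a single nonnegative supermartingale. One application of Ville's inequality then does not give a bound holding simultaneously for all $t$; you would need a union bound over $t$, which changes the constant. Any $\eta>0$ also replaces the posterior with noise variance exactly $\sigma_\varepsilon^2$ that the statement is about. Both issues disappear if you keep $\lambda=\sigma_\varepsilon^2$ and use the kernelized Abbasi-Yadkori inequality $\varepsilon_{1:t}^\top K_t(K_t+\lambda I)^{-1}\varepsilon_{1:t}\le2\sigma_\varepsilon^2\bigl(\tfrac12\log\det(I+\lambda^{-1}K_t)+\ln(1/\delta)\bigr)$, which holds simultaneously for all $t$. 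It follows by mixing over $g\sim\mathcal{GP}(0,k/\lambda)$; the Gaussian integral is valid for singular covariances, so no $\eta$ is needed and the ``$+1$'' becomes pure slack. Your proof then establishes the band for all $t\ge1$ with $\beta_t=B+\sqrt{2(\gamma_{t-1}+\ln(1/\delta))}$. This implies the stated band when $\sigma_\varepsilon\ge1$, and is the width the downstream analysis should use otherwise.
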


Applying Lemma~\ref{lem:gp-confidence} to the $\theta^\star$-GP on the joint domain $\mathcal{X} \times \mathcal{W}$ with confidence level $\delta_A$ gives \textbf{Event A}: for all $(x,w) \in \mathcal{X} \times \mathcal{W}$ and all $t \ge 1$,
\begin{equation}
\label{eq:event-A}
\bigl|\,f(x,w) - \mu_{t-1}^{\theta^\star}(x,w)\,\bigr| \;\le\; {\beta_t^{\theta^\star}}\,\sigma_{t-1}^{\theta^\star}(x,w).
\end{equation}

For the second event, we adapt the noise concentration lemma of \citet[Lemma~5.1]{ziomek-aistats25a} to our side-stratified setting. Recall the stratified index sets $S^\theta_t = \{i \le t : \theta_i = \theta\}$ and $\hat S^\theta_t = \{i \le t : \vartheta_i = \theta\}$. The only modification relative to \citet{ziomek-aistats25a} is a union bound over both stratified sets per candidate, which contributes the factor $2|\Theta|$ inside the logarithm of $\xi_t$.

\paragraph{Lemma~\ref{lem:noise-concentration} (recalled).}
With $\xi_t = 2\sigma_\varepsilon^2 \log(2|\Theta|\pi^2 t^2 / 3\delta_B)$, $\mathbb{P}\bigl(\,\forall \theta \in \Theta,\;\forall t \le T,\; \forall S \in \{S^\theta_t, \hat S^\theta_t\}:\; \bigl|\,{\textstyle\sum_{i \in S}} \varepsilon_i\,\bigr| \le \sqrt{\xi_t\,|S|} \,\bigr) \;\ge\; 1 - \delta_B.$

\begin{proof}
Fix $\theta \in \Theta$ and a side $S \in \{S^\theta_t, \hat S^\theta_t\}$. Let $v_1, \ldots, v_T$ be an i.i.d.\ sequence of $\sigma_\varepsilon$-sub-Gaussian variables, sampled before the experiment as hidden state. Whenever $S$ is expanded at some iteration $t$, set the noise of that iteration to $\varepsilon_t = v_i$ with $i = |S|$. The constructed noise has the same distribution as the original i.i.d.\ noise.

The standard sub-Gaussian Hoeffding bound \citep{lattimore-cambridge20} gives, for any $n \le T$,
\begin{equation*}
\mathbb{P}\Bigl(\Bigl|{\textstyle\sum_{i=1}^n} v_i\Bigr| \ge \sqrt{An}\,\Bigr) \;\le\; 2 \exp\!\Bigl(-\tfrac{A}{2\sigma_\varepsilon^2}\Bigr).
\end{equation*}
Define $E_n := \bigl\{\bigl|\sum_{i=1}^n v_i\bigr| > \sqrt{\xi_n n}\bigr\}$. With $\xi_n = 2\sigma_\varepsilon^2 \log(2|\Theta|\pi^2 n^2 / 3\delta_B)$, the Hoeffding bound gives
\begin{equation*}
\mathbb{P}(E_n) \;\le\; 2\exp\!\Bigl(-\tfrac{\xi_n}{2\sigma_\varepsilon^2}\Bigr) \;=\; 2\cdot\frac{3\delta_B}{2|\Theta|\pi^2 n^2} \;=\; \frac{3\delta_B}{|\Theta|\pi^2 n^2}.
\end{equation*}
Since $|S| \le T$,
\begin{align*}
\mathbb{P}\Bigl(\exists t:\, \Bigl|{\textstyle\sum_{i \in S}}\varepsilon_i\Bigr| > \sqrt{\xi_t |S|}\Bigr) \;=\; \mathbb{P}\Bigl({\textstyle\bigcup_{n=1}^{|S|}} E_n\Bigr) \\\;\le\; \mathbb{P}\Bigl({\textstyle\bigcup_{n=1}^T} E_n\Bigr) 
\;\le\; {\textstyle\sum_{n=1}^T} \mathbb{P}(E_n)\\ \;\le\; \frac{3\delta_B}{|\Theta|\pi^2}\sum_{n=1}^\infty \frac{1}{n^2} \;=\; \frac{3\delta_B}{|\Theta|\pi^2}\cdot\frac{\pi^2}{6} \;=\; \frac{\delta_B}{2|\Theta|}.
\end{align*}
A union bound over the $2|\Theta|$ choices of $(\theta, S)$ gives total failure probability at most $\delta_B$.
\end{proof}
We call the event in Lemma~\ref{lem:noise-concentration} \textbf{Event B.} By a union bound, Events A and B hold jointly with probability at
least $1 - \delta$ where we set $\delta_A = \delta_B = \delta/2$.
The remainder of the proof is deterministic conditional on this
good event. We now restate the theorem and give the full proof.

\begin{theorem}[Cumulative regret of \our, restated]
\label{thm:RRA-UCB-app}
Assume $f \in \mathcal{H}_{k_{\theta^\star}}$ with
$\|f\|_{\mathcal{H}_{k_{\theta^\star}}} \le B$ for some
$\theta^\star \in \Theta$ and that
$\varepsilon_t \sim \mathcal{N}(0, \sigma_\varepsilon^2)$ i.i.d.,
with $k_\theta \le 1$ for all $\theta \in \Theta$. Set $\beta_t^\theta = B + \sigma_\varepsilon\sqrt{2(\gamma_{t-1}^\theta + 1 + \ln(2/\delta))}$ and $\xi_t = 2\sigma_\varepsilon^2 \log(4|\Theta|\pi^2 t^2 / 3\delta)$. Then with probability at least $1 - \delta$ the cumulative regret of
Algorithm~\ref{alg:ouralg} after $T$ iterations satisfies
\begin{equation}
\label{eq:he-cv-regret-app}
R_T \;\le\; \mathcal{O}\!\Bigl(\,|\Theta|\, B + |\Theta|\sqrt{\xi_T\, T} + \beta_T\,|\Theta|\sqrt{T\, \gamma_T}\,\Bigr),
\end{equation}
where $\beta_T := \max_{\theta \in \Theta} \beta_T^\theta$ and
$\gamma_T := \max_{\theta \in \Theta} \gamma_T^\theta$.
\end{theorem}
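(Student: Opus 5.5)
We condition on Events A and B, which hold jointly with probability at least $1-\delta$ once we take $\delta_A=\delta_B=\delta/2$. This matches the stated choices of $\beta_t^\theta$ and $\xi_t$. Everything after that is deterministic, and we follow the five-step outline given at the start of the Appendix.

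\textbf{Step 1 ($\theta^\star$ survives).} Whenever $\theta^\star$ is tested on either side, its residuals decompose as $\eta_i=\varepsilon_i+(f-\mu_{i-1}^{\theta^\star})(x_i,w_i)$. By Event B the noise sum is at most $\sqrt{\xi_t|S|}$. By Event A the model-error sum is at most $\sum_{i\in S}\beta_i^{\theta^\star}\sigma_{i-1}^{\theta^\star}(x_i,w_i)$. Hence neither \eqref{eq:he-cv-elim-u} nor \eqref{eq:he-cv-elim-l} fires for $\theta^\star$, and $\theta^\star\in\Theta_{t-1}$ for all $t$.

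\textbf{Step 2 (per-round bound).} With $\theta^\star$ feasible, \eqref{eq:certified-gap} holds. Lemma~\ref{lem:cvar-shift} then gives $r_t\le G_t(w_t)$.

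We split $G_t(w_t)$ at the query point using the observation $y_t$:
\[
G_t(w_t)=\bigl(y_t-\mu^{\theta_t}_{t-1}\bigr)-\bigl(y_t-\mu^{\vartheta_t}_{t-1}\bigr)+\beta_t^{\vartheta_t}\sigma^{\vartheta_t}_{t-1}+\beta_t^{\theta_t}\sigma^{\theta_t}_{t-1}
=\eta_t-\hat\eta_t+(\text{width terms}),
\]
with all quantities evaluated at $(x_t,w_t)$. Summing over $t$ and regrouping by candidate gives
\[
R_T\le\sum_\theta\Bigl(\sum_{i\in S^\theta_T}\eta_i-\sum_{i\in\hat S^\theta_T}\hat\eta_i\Bigr)+\sum_t\bigl(\beta_t^{\vartheta_t}\sigma^{\vartheta_t}_{t-1}+\beta_t^{\theta_t}\sigma^{\theta_t}_{t-1}\bigr).
\]

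\textbf{Step 3 (elimination rounds).} Each candidate is eliminated at most once, so at most $2|\Theta|$ rounds trigger an elimination. On each such round we bound $r_t$ crudely. Under Event A, $|\mathrm{CVaR}_\alpha[f]|\le\|f\|_\infty\le B$ because $k\le1$, so $r_t\le 2B$. This contributes the $O(|\Theta|B)$ term. We remove those rounds from the residual sums and treat them separately.

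\textbf{Step 4 (residual sums via thresholds).} For each $\theta$, consider its last non-eliminating test time on each side. There the criterion failed, which gives
\[
\Bigl|\sum_{i\in S}\eta_i\Bigr|\le\sqrt{\xi_t|S|}+\sum_{i\in S}\beta_i^\theta\sigma_{i-1}^\theta(x_i,w_i).
\]
The subtlety, and the \emph{main obstacle}, is that the sets $S^\theta_T$ and $\hat S^\theta_T$ can be interrupted. A candidate may be eliminated through the opposite criterion, so its final index on one side was never certified by that side's threshold. We would handle this as follows. For each $\theta$, split its index set at the moment it leaves $\Theta$. Everything before the last certified test is controlled by the threshold. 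The remaining uncertified indices are at most the single elimination round per candidate per side, each with residual at most $O(B+\beta\cdot 1+\text{noise})$, and these are absorbed into Step 3's $O(|\Theta|B)$ term. Whatever noise appears there is already covered by Event B. The bookkeeping over the two sides is what produces a factor $|\Theta|$ instead of $\sqrt{|\Theta|}$. We would accept this factor rather than try to improve it.

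\textbf{Step 5 (assembly).} Summing the noise thresholds and using Cauchy--Schwarz over at most $2|\Theta|$ sets gives $\sum_\theta\sqrt{\xi_T|S^\theta_T|}\le\sqrt{\xi_T}\sqrt{|\Theta|T}$, which we bound by $|\Theta|\sqrt{\xi_T T}$. All remaining width terms have the form $\sum_\theta\sum_{i\in S}\beta_i^\theta\sigma_{i-1}^\theta$. We bound each by $\beta_T\sqrt{|S|\sum_i(\sigma_{i-1}^\theta)^2}$. Here we must address that $\sigma^\theta_{i-1}$ conditions on all data rather than only on $S$. Conditioning on more data only decreases the variance, so the standard elliptical-potential argument applied to the subsequence $S$ still gives $\sum_{i\in S}(\sigma^\theta_{i-1})^2\le C\gamma_{|S|}^\theta$. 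This yields $O(\beta_T|\Theta|\sqrt{T\gamma_T})$, and collecting the three contributions completes the bound.
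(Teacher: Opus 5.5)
\textbf{There is a genuine gap in your Step 4.} Your Steps 1, 2, 3 and 5 essentially match the paper's proof. Your remark that the information-gain bound survives restriction to a subsequence is also correct. The gap sits exactly at the obstacle you flag, and your diagnosis of that obstacle is off.

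\textbf{Where the streams are really interrupted.} Algorithm~\ref{alg:ouralg} evaluates both criteria every round. So every index of $S^\theta_T$ is tested by \eqref{eq:he-cv-elim-u}, and every index of $\hat S^\theta_T$ is tested by \eqref{eq:he-cv-elim-l}. A candidate's own stream is uncertified only at the single round where its own criterion fires, never because of the opposite criterion. The real interruption is created by your Step 3. When you replace a critical round $t$ by $r_t\le 2B$ and drop it from the residual sums, you drop both $\eta_t$ and $\hat\eta_t$. Suppose \eqref{eq:he-cv-elim-u} eliminates $\theta_t$ while $\vartheta_t\neq\theta_t$ survives. Then $t$ is removed from the \emph{interior} of $\hat S^{\vartheta_t}_T$, and symmetrically for \eqref{eq:he-cv-elim-l}. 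Writing $\mathcal C$ for the set of critical rounds, $\sum_{i\in\hat S^{\vartheta_t}_T\setminus\mathcal C}\hat\eta_i$ is then not a prefix sum, and no single failed threshold test controls it. Your claim that everything before the last certified test is controlled by the threshold is therefore false for the index sets you actually sum over.

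\textbf{Why the dropped residuals are not $O(B+\beta+\text{noise})$.} The term $\hat\eta_t=y_t-\mu^{\vartheta_t}_{t-1}(x_t,w_t)$ involves the posterior mean of a possibly misspecified surrogate. Events A and B give no control of $\mu^\theta_{t-1}$ for $\theta\neq\theta^\star$; in general it is only bounded by quantities like $\|y_{1:t-1}\|/\sigma_\varepsilon$, which grow with $t$. So these residuals cannot be absorbed into $O(|\Theta|B)$. Keeping critical rounds in the decomposition instead does not help. You would then need the eliminated candidate's residual at its firing round, which is precisely the term whose prefix sum just broke the threshold. You also cannot charge one round both $2B$ and its residual decomposition.

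\textbf{The missing idea.} This is the paper's argument. Let $t$ be the $j$-th element of $\hat S^{\vartheta_t}_T$, and let $\hat\Phi^{\vartheta_t}_T$ denote the right-hand side of \eqref{eq:he-cv-elim-l} at horizon $T$. Since $\vartheta_t$ is not eliminated by \eqref{eq:he-cv-elim-l} at round $t$, the two consecutive prefix sums $\hat P_j$ and $\hat P_{j-1}$ of its stream both pass the threshold. Hence $|\hat\eta_t|\le|\hat P_j|+|\hat P_{j-1}|\le 2\hat\Phi^{\vartheta_t}_T=O(\sqrt{\xi_T T}+\beta_T\sqrt{\gamma_T T})$. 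There are at most $|\Theta|$ such interior removals per side, contributing $O(|\Theta|\sqrt{\xi_T T}+\beta_T|\Theta|\sqrt{T\gamma_T})$. This, not two-sided bookkeeping at cost $O(|\Theta|B)$, is where the linear $|\Theta|$ factor on the horizon-dependent terms comes from. As written, your accounting would put only $\sqrt{|\Theta|}$ on those terms, which signals that this term was dropped.
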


The argument has five steps. Steps~1--2 reduce the per-round regret to a quantity the algorithm observes: the correct candidate $\theta*$ is never eliminated (Step~1), so its confidence bands remain valid and bound one round's regret by the gap between the two selected candidates at the query (Step~2). Steps~3--5 bound the residual part over the horizon: the few rounds where an elimination fires are charged a constant (Step~3), the residuals on all other rounds are controlled by the elimination thresholds themselves (Step~4), and the pieces are assembled into the rate (Step~5)

\paragraph{Step 1: Preservation of $\theta^\star$.} 

We first show the correct candidate is never discarded, so it remains available as a valid reference. Specifically, we show that neither the upper-side nor the lower-side elimination
criterion can fire on $\theta^\star$. Consider any iteration $t$ at
which $\theta_t = \theta^\star$ and the criterion
\eqref{eq:he-cv-elim-u} is checked. For each $i \in S^{\theta^\star}_t$
the residual $\eta_i$ decomposes as
\begin{equation*}
\eta_i \;=\; y_i - \mu_{i-1}^{\theta^\star}(x_i, w_i)
       \;=\; \varepsilon_i + \bigl(f(x_i, w_i) - \mu_{i-1}^{\theta^\star}(x_i, w_i)\bigr).
\end{equation*}
Applying the triangle inequality together with Events A and B,
\begin{align*}
\Bigl|\,{\textstyle\sum_{i \in S^{\theta^\star}_t}} \eta_i\,\Bigr|
 \;\\\le\; \Bigl|\,{\textstyle\sum_{i \in S^{\theta^\star}_t}} \varepsilon_i\,\Bigr|
       + {\textstyle\sum_{i \in S^{\theta^\star}_t}}\!\bigl|\,f(x_i, w_i) - \mu_{i-1}^{\theta^\star}(x_i, w_i)\,\bigr| \\
 \;\le\; \sqrt{\xi_t\,|S^{\theta^\star}_t|}
       + {\textstyle\sum_{i \in S^{\theta^\star}_t}}\!{\beta_i^{\theta^\star}}\,\sigma_{i-1}^{\theta^\star}(x_i, w_i),
\end{align*}
which is exactly the right-hand side of \eqref{eq:he-cv-elim-u}.
Hence the criterion does not fire and $\theta^\star$ is not eliminated from the upper side. For the lower side the argument is identical with $\eta_i$ replaced by $\hat\eta_i = y_i - \mu_{i-1}^{\theta^\star}(x_i, w_i)$ and $S^{\theta^\star}_t$ replaced by $\hat S^{\theta^\star}_t$. Therefore $\theta^\star$ is never eliminated, and $\theta^\star \in \Theta_t$ for all $t \ge 0$.

\paragraph{Step 2: Per-round regret.}
Since $\theta^\star \in \Theta_{t-1}$ for all $t$ (Step 1), Event~A combined with the integral representation of CVaR \citep{nguyenquoc-neurips21a} gives, for every $x \in \mathcal{X}$,
\begin{equation*}
c_{l_{t-1}}^{\theta^\star}(x; \alpha) \;\le\; \mathrm{CVaR}_\alpha[f(x, W)] \;\le\; c_{u_{t-1}}^{\theta^\star}(x; \alpha).
\end{equation*}
We bound the two terms of $r_t = \mathrm{CVaR}_\alpha[f(x_t, W)] - \mathrm{CVaR}_\alpha[f(x^\star, W)]$ separately.

\emph{Upper bound at $x_t$.} By the pessimistic rule \eqref{eq:he-cv-vt}, $\vartheta_t$ maximises $c_{u_{t-1}}^{\theta}(x_t; \alpha)$ over $\Theta_{t-1}$; since $\theta^\star \in \Theta_{t-1}$ is a feasible choice,
\begin{equation*}
\mathrm{CVaR}_\alpha[f(x_t, W)] \;\le\; c_{u_{t-1}}^{\theta^\star}(x_t; \alpha) \;\le\; c_{u_{t-1}}^{\vartheta_t}(x_t; \alpha).
\end{equation*}

\emph{Lower bound at $x^\star$.} By the doubly-optimistic rule \eqref{eq:he-cv-xt}, $(x_t, \theta_t)$ minimises $c_{l_{t-1}}^{\theta}(x; \alpha)$ over $\mathcal{X} \times \Theta_{t-1}$; since $(x^\star, \theta^\star)$ is one element of this set,
\begin{equation*}
c_{l_{t-1}}^{\theta_t}(x_t; \alpha) \;\le\; c_{l_{t-1}}^{\theta^\star}(x^\star; \alpha) \;\le\; \mathrm{CVaR}_\alpha[f(x^\star, W)].
\end{equation*}

Combining the two,
\begin{equation}
\label{eq:cvar-gap}
r_t \;\le\; c_{u_{t-1}}^{\vartheta_t}(x_t; \alpha) - c_{l_{t-1}}^{\theta_t}(x_t; \alpha).
\end{equation}
Both steps combine only the defining property of the maximiser or minimiser with the validity of the $\theta^\star$-band; no assumption is placed on the bands of any other candidate.

The right-hand side of \eqref{eq:cvar-gap} is the difference of the CVaRs of two random variables on the common probability space $(\mathcal{W}, \mathbb{P}_W)$. Applying Lemma~\ref{lem:cvar-shift} with $X = u_{t-1}^{\vartheta_t}(x_t, W)$ and $Y = l_{t-1}^{\theta_t}(x_t, W)$, and recalling $G_t(w) = u_{t-1}^{\vartheta_t}(x_t, w) - l_{t-1}^{\theta_t}(x_t, w)$,
\begin{align}
c_{u_{t-1}}^{\vartheta_t}(x_t; \alpha) - c_{l_{t-1}}^{\theta_t}(x_t; \alpha) \;\le\; \operatorname*{ess\,sup}_{W} G_t(W) \\ \;\le\; \max_{w \in \mathcal{W}} G_t(w) \;=\; G_t(w_t),
\end{align}
where the last two steps use that the pointwise maximum dominates the essential supremum and that $w_t$ attains it by the lacing-value rule \eqref{eq:he-cv-lv}. Hence
\begin{equation}
\label{eq:band-gap}
r_t \;\le\; u_{t-1}^{\vartheta_t}(x_t, w_t) - l_{t-1}^{\theta_t}(x_t, w_t).
\end{equation}
It remains to expand the right-hand side. Using $u_{t-1}^{\theta} = \mu_{t-1}^{\theta} + \beta_t^{\theta} \sigma_{t-1}^{\theta}$ and $l_{t-1}^{\theta} = \mu_{t-1}^{\theta} - \beta_t^{\theta} \sigma_{t-1}^{\theta}$, together with $\mu_{t-1}^{\theta_t}(x_t, w_t) = y_t - \eta_t$ and $\mu_{t-1}^{\vartheta_t}(x_t, w_t) = y_t - \hat\eta_t$,
\begin{align}
\label{eq:per-round-bound}
r_t \;\le\;\; & u_{t-1}^{\vartheta_t}(x_t, w_t) - l_{t-1}^{\theta_t}(x_t, w_t) \nonumber \\
 \;=\;\; & \bigl(\mu_{t-1}^{\vartheta_t}(x_t, w_t) - \mu_{t-1}^{\theta_t}(x_t, w_t)\bigr) \nonumber \\
   & + \beta_t^{\vartheta_t}\sigma_{t-1}^{\vartheta_t}(x_t, w_t) + \beta_t^{\theta_t}\sigma_{t-1}^{\theta_t}(x_t, w_t) \nonumber \\
 \;=\;\; & (\eta_t - \hat\eta_t) + \beta_t^{\theta_t}\sigma_{t-1}^{\theta_t}(x_t, w_t) + \beta_t^{\vartheta_t}\sigma_{t-1}^{\vartheta_t}(x_t, w_t).
\end{align}
\paragraph{Step 3: Splitting into critical and non-critical iterations.} Rounds where an elimination fires can incur large regret, however, because we have finitely many models, their total cost is not growing with $T$. Define the set of \emph{critical} iterations as those where at
least one of the elimination criteria fires:
\begin{equation*}
\mathcal{C} \;:=\; \bigl\{\, 1 \le t \le T :
     \text{\eqref{eq:he-cv-elim-u} or \eqref{eq:he-cv-elim-l}
     triggers at $t$} \,\bigr\}.
\end{equation*}
Each $t \in \mathcal{C}$ removes at least one element from
$\Theta_t$. Since $\theta^\star$ is never eliminated (Step 1), at
most $|\Theta| - 1$ elements can be removed across all iterations,
so $|\mathcal{C}| \le |\Theta| - 1 \le |\Theta|$.

On critical iterations the algorithm may incur large per-round
regret: a wrong surrogate selected optimistically can lead the
algorithm to query a poor point. We absorb this into a trivial
bound. Since $k_{\theta^\star} \le 1$ and
$\|f\|_{\mathcal{H}_{k_{\theta^\star}}} \le B$, we have
$|f(x, w)| \le B$ for all $(x, w)$, hence
$|\mathrm{CVaR}_\alpha[f(x, W)]| \le B$ and $r_t \le 2B$ for any
$t$. Crucially, although the regret on each critical iteration can
be large, the \emph{number} of critical iterations is bounded by
$|\Theta|$ independent of the horizon $T$. The cost of elimination
is therefore an additive constant in $T$, not a term that grows
with the horizon.

\paragraph{Step 4: Bounding the residual sums on non-critical iterations.}
Fix a candidate $\theta$ and let $S^\theta_T = \{t_1 < t_2 < \cdots\}$ be the rounds at which $\theta$ was the optimistic selection ($\theta_t = \theta$), in time order; we refer to this as $\theta$'s \emph{stream}. Write $P_j := \sum_{k \le j} \eta_{t_k}$ for the running residual total after $\theta$'s first $j$ selections, and call a subset of the stream an \emph{initial run} if it is a prefix $\{t_1, \dots, t_j\}$, that is, if it contains no gaps. 

In our setting each round feeds two streams, an optimistic and a pessimistic one, and a round is critical whenever \emph{either} fires. Consequently, when we sum $\sum_{t \notin \mathcal{C}} \eta_t$ over the optimistic stream, a round $i$ may be excluded because the \emph{lower} criterion fired there, potentially on a different candidate, even though $\theta$ itself survived and was selected again afterwards. Such a round lies in the interior of $\theta$'s stream, so the surviving rounds no longer form an initial ru. The rest of this step controls exactly these interior omissions. Let
\begin{equation*}
\Phi^\theta_T \;:=\; \sqrt{\xi_T\,|S^\theta_T|} + \sum_{i \in S^\theta_T} {\beta_i^\theta}\,\sigma_{i-1}^\theta(x_i, w_i)
\end{equation*}
denote the elimination threshold from \eqref{eq:he-cv-elim-u}, which is non-decreasing in $T$.

\emph{Running totals of a surviving candidate stay under the threshold.} The criterion \eqref{eq:he-cv-elim-u} compares exactly the running total $|P_j|$ against $\Phi^\theta_{t_j}$ at each selection $t_j$. If $\theta$ is still alive after round $t_j$, then the criterion did not fire there, so $|P_j| \le \Phi^\theta_{t_j} \le \Phi^\theta_T$ by monotonicity. Since $\theta$ can be eliminated by this criterion at most once, and only at its final selection $\max S^\theta_T$, the inequality $|P_j| \le \Phi^\theta_T$ holds for every $j$ except possibly the last.

\emph{Two kinds of excluded rounds.} We want $\sum_{t \notin \mathcal{C}} \eta_t$, the residual sum over rounds where \emph{no} elimination fired, so two kinds of $\theta$-rounds drop out. First, $\theta$ may have been eliminated itself, by \eqref{eq:he-cv-elim-u}; this happens at its last selection, so dropping that round leaves the initial run $t_1, \dots, t_{j-1}$, whose total $P_{j-1}$ is already bounded by $\Phi^\theta_T$. Second, a round $t_j \in S^\theta_T$ may drop out because the \emph{other} criterion \eqref{eq:he-cv-elim-l} eliminated a \emph{different} candidate $\vartheta_{t_j} \ne \theta$ there. Such a round can sit anywhere in $\theta$'s sequence, so removing it leaves a gap rather than an initial run, and the running-total bound no longer applies to the remaining sum directly.

\emph{Recovering an omitted residual.} For such an excluded round $t_j$ --- never the last selection of $\theta$, hence a round at which \eqref{eq:he-cv-elim-u} did not fire --- both $P_j$ and $P_{j-1}$ are under the threshold, and the lone residual is their difference,
\begin{equation}
\label{eq:bystander-eta}
|\eta_{t_j}| \;=\; |P_j - P_{j-1}| \;\le\; |P_j| + |P_{j-1}| \;\le\; 2\,\Phi^\theta_T .
\end{equation}

\emph{Assembling the sum.} Let $F_u$ and $F_l$ be the rounds at which \eqref{eq:he-cv-elim-u} and \eqref{eq:he-cv-elim-l} fire, so $\mathcal{C} = F_u \cup F_l$ and $\{t \notin \mathcal{C}\} = \{t \notin F_u\} \setminus (F_l \setminus F_u)$. Summing over candidates,
\begin{equation*}
\sum_{t \notin \mathcal{C}} \eta_t \;=\; \sum_{t \notin F_u} \eta_t \;-\; \sum_{t \in F_l \setminus F_u} \eta_t .
\end{equation*}
The first sum is $\sum_\theta \sum_{i \in S^\theta_T \setminus F_u} \eta_i$, an initial run for each $\theta$, hence at most $\sum_\theta \Phi^\theta_T$ by the running-total bound. The second ranges over the rounds excluded by the opposite criterion; there are at most $|F_l \setminus F_u| \le |\mathcal{C}| \le |\Theta|$ of them, each bounded by \eqref{eq:bystander-eta}. Writing $b^u_\theta := |(F_l \setminus F_u) \cap S^\theta_T|$ (so $\sum_\theta b^u_\theta \le |\Theta|$),
\begin{equation}
\label{eq:noncrit-eta}
\Bigl|\,{\textstyle\sum_{t \notin \mathcal{C}}} \eta_t\,\Bigr| \;\le\; {\textstyle\sum_{\theta}} \Phi^\theta_T \;+\; 2 {\textstyle\sum_{\theta}} b^u_\theta\,\Phi^\theta_T .
\end{equation}
The same bound holds for $|\sum_{t \notin \mathcal{C}}\hat\eta_t|$ with $(\hat\eta, \hat S^\theta_T, \hat\Phi^\theta_T)$ and counts $\hat b_\theta := |(F_u \setminus F_l) \cap \hat S^\theta_T|$, $\sum_\theta \hat b_\theta \le |\Theta|$. The extra term beyond the single-criterion analysis of \citet{ziomek-aistats25a} is the second sum in \eqref{eq:noncrit-eta}: with two criteria, an elimination on one side can remove a mid-sequence round from the other side's residual stream, and \eqref{eq:bystander-eta} charges each of the at most $|\Theta|$ such removals two thresholds.

\paragraph{Step 5: Cumulative regret.} We now combine the three contributions; the few costly elimination rounds, the residual sums, and the usual GP-uncertainty widths into the final rate. With $r_t \le 2B$ on $\mathcal{C}$ (Step 3) and the per-round bound
\eqref{eq:per-round-bound} on $t \notin \mathcal{C}$,
\begin{align*}
R_T \;\le\; \underbrace{2|\mathcal{C}|\,B}_{\text{(A)}}
   \;+\; \underbrace{\Bigl|{\textstyle\sum_{t \notin \mathcal{C}}}\hat\eta_t\Bigr|
                   + \Bigl|{\textstyle\sum_{t \notin \mathcal{C}}}\eta_t\Bigr|}_{\text{(B)}}
   \;\\+\; \underbrace{{\textstyle\sum_{t \notin \mathcal{C}}}\bigl(
        {\beta_t^{\theta_t}}\sigma_{t-1}^{\theta_t}
       + {\beta_t^{\vartheta_t}}\sigma_{t-1}^{\vartheta_t}\bigr)(x_t,w_t)}_{\text{(C)}} .
\end{align*}

\emph{Two recurring bounds.} Cauchy--Schwarz with
$\sum_\theta|S^\theta_T| = T$ gives
$\sum_\theta\sqrt{\xi_T|S^\theta_T|} \le \sqrt{|\Theta|\,\xi_T\,T}$;
and $\beta_t^\theta\le\beta_T$ with the per-surrogate
maximum-information-gain bound of \citet[Lemma~5.4]{srinivas-icml10a},
$\sum_{i\in S^\theta_T}(\sigma_{i-1}^\theta)^2 \le C\gamma_T$, gives both
$\sum_\theta\sum_{i\in S^\theta_T}{\beta_i^\theta}\sigma_{i-1}^\theta
   \le \beta_T\,\sqrt{C\,|\Theta|\,T\,\gamma_T}$ and, per surrogate,
$\sum_{i\in S^\theta_T}{\beta_i^\theta}\sigma_{i-1}^\theta
   \le \beta_T\,\sqrt{C\,\gamma_T\,T}$. Hence
\begin{align*}
{\textstyle\sum_\theta}\Phi^\theta_T \le
   \sqrt{|\Theta|\,\xi_T\,T} + \beta_T\,\sqrt{C\,|\Theta|\,T\,\gamma_T},
\\
\Phi^\theta_T \le \sqrt{\xi_T\,T}+\beta_T\,\sqrt{C\,\gamma_T\,T} =: \Phi^\star_T
\;\;\forall\theta,
\end{align*}
and the same two bounds hold for $\hat\Phi^\theta_T$ on the lower side.

\emph{Term (A).} By Step 3, $|\mathcal{C}| \le |\Theta|$, so
$\text{(A)} \le 2|\Theta|\,B$.

\emph{Term (B).} By \eqref{eq:noncrit-eta}, its lower-side analogue, and
$\sum_\theta b^u_\theta,\ \sum_\theta \hat b_\theta \le |\Theta|$,
\begin{align*}
\text{(B)} \;\le\; {\textstyle\sum_\theta}\Phi^\theta_T
   + {\textstyle\sum_\theta}\hat\Phi^\theta_T + 4|\Theta|\,\Phi^\star_T
 \;\\\le\; 2\sqrt{|\Theta|\,\xi_T\,T}
       + 2\beta_T\,\sqrt{C\,|\Theta|\,T\,\gamma_T}
       + 4|\Theta|\,\Phi^\star_T .
\end{align*}

\emph{Term (C).} As in the original argument, $\beta_t^{\theta_t}\le\beta_T$
and the per-surrogate information-gain bound give
$\text{(C)} \le 2\beta_T\,\sqrt{C\,|\Theta|\,T\,\gamma_T}$.

\emph{Combining.} Substituting
$\Phi^\star_T \le \sqrt{\xi_T T}+\beta_T\,\sqrt{C\,\gamma_T\,T}$, the
bystander contribution $4|\Theta|\,\Phi^\star_T$ dominates every
$\sqrt{|\Theta|\,\cdot}$ term, so
\begin{align*}
R_T \;\le\;\;  2|\Theta|\,B
   + 4|\Theta|\bigl(\sqrt{\xi_T\,T} + \beta_T\,\sqrt{C\,\gamma_T\,T}\bigr)
   \\+ \mathcal{O}\!\bigl(\sqrt{|\Theta|\,\xi_T\,T} + \beta_T\,\sqrt{|\Theta|\,T\,\gamma_T}\bigr) \\
 \;=\;\;  \mathcal{O}\!\Bigl(\,|\Theta|\,B
   + |\Theta|\sqrt{\xi_T\,T}
   + \beta_T\,|\Theta|\sqrt{T\,\gamma_T}\,\Bigr),
\end{align*}
which is the claimed bound \eqref{eq:he-cv-regret-app}. 
For a fixed finite candidate set, the bound implies sublinear
regret whenever $\gamma_T = o(\sqrt{T})$, since
$\beta_T = \mathcal{O}(\sqrt{\gamma_T + 1})$ and
$\xi_T = \mathcal{O}(\log T)$ for the standard kernels of
\citet{srinivas-icml10a}. \qed

 \begin{remark}[On the $|\Theta|$ versus $\sqrt{|\Theta|}$ dependence]
The bound \eqref{eq:he-cv-regret-app} carries $|\Theta|$ as a prefactor on the two horizon-dependent terms, whereas the single-criterion bound of \citet{ziomek-aistats25a} carries only $\sqrt{|\Theta|}$. The discrepancy is due entirely to the correction term in \eqref{eq:noncrit-eta}, and its source is the initial-run property of Step~4. With a single elimination criterion, a candidate's surviving rounds always form an initial run---it is excluded from its own residual sum only when it is itself eliminated, at its last selection---so the sum is controlled in one step by the running-total bound. With two criteria the property fails: a round may be excluded because the \emph{opposite} criterion eliminated a \emph{different} candidate, leaving a gap in the interior of the stream. Each such round must then be bounded individually rather than pooled, and \eqref{eq:bystander-eta} charges it the full threshold $\Phi^\star_T = \mathcal{O}(\sqrt{\xi_T\,T} + \beta_T\sqrt{\gamma_T\,T})$. There are up to $|\Theta|$ of them, so their aggregate $\mathcal{O}(|\Theta|\,\Phi^\star_T)$ multiplies the threshold by the count \emph{linearly}, which dominates the $\sqrt{|\Theta|}$ terms and yields the $|\Theta|$ prefactor.
\end{remark}

\section{Experimental Details}
\label{app:exp}

This appendix gives the full experimental protocol: the representation of the
environmental distribution~$P_W$, the computation of the posterior CVaR bands,
the four benchmark problems and how their related source tasks are generated,
the per-iteration computational cost, the final hyper-parameters, and the
computing environment.

% ---------------------------------------------------------------------
\subsection{Environmental distribution $P_W$}
\label{app:conv}

We represent the context law $P_W$ by a fixed set of $M$ low-discrepancy (Sobol)
atoms $\{w_1,\dots,w_M\}$, drawn once at problem construction with a fixed seed
and reused across all designs $x$ and all iterations. Unless stated otherwise
$P_W$ is uniform on the context box; for problems whose disturbance concentrates
around a nominal operating point we instead place the atoms under a
truncated-normal law. This shared atom set defines the empirical tail average
over which every CVaR quantity below is computed.

% ---------------------------------------------------------------------
\subsection{Computing the posterior CVaR bands}
\label{app:bands}

Each candidate $\theta\in\Theta$ is a Gaussian-process surrogate with
\emph{frozen} hyper-parameters (kernel, length-scales, output-scale, noise);
conditioning on the $t$ observations yields, at any design $x$, a closed-form
per-atom posterior mean $\mu^\theta(x,w_j)$ and standard deviation
$\sigma^\theta(x,w_j)$ for each of the $M$ context atoms $w_j$. We form the two
bands \emph{analytically} from these per-atom moments: each band is the
empirical CVaR of the shifted posterior mean over the atoms, i.e.\ the mean of
its largest $k=\lceil(1-\alpha)M\rceil$ entries,
\begin{align}
  c_u^\theta(x;\alpha) = \widehat{\mathrm{CVaR}}_\alpha\big[\,\mu^\theta(x,W)+\beta\,\sigma^\theta(x,W)\,\big]
   \\ = \tfrac{1}{k}\!\!\sum_{j\in\mathrm{top}\text{-}k}\!\!\big(\mu^\theta_j+\beta\sigma^\theta_j\big),\\
  c_l^\theta(x;\alpha) = \widehat{\mathrm{CVaR}}_\alpha\big[\,\mu^\theta(x,W)-\beta\,\sigma^\theta(x,W)\,\big],
  \\k =\lceil(1-\alpha)M\rceil .
\end{align}
Because the atoms are fixed and shared across designs and iterations, both bands
are deterministic functionals of the GP posteriors: no posterior sampling is
involved, so acquisition and elimination compare candidates without Monte-Carlo
noise. The same atoms enter every candidate's band, so band differences across
designs or across candidates carry no atom-set variability.

% ---------------------------------------------------------------------
\subsection{Benchmark suite}
\label{app:bench}

Table~\ref{tab:bench} summarizes the four benchmarks; all use the same transfer
method set: \our against the single-selection
baselines (CV-UCB, CV-TS, $\rho\text{-KG}^{\mathrm{apx}}$), BMA-UCB and the band hyper-parameters
$\beta{=}3.0$, $\xi{=}0.5$.

\begin{table*}[t]
\centering
\small
\setlength{\tabcolsep}{4pt}
\begin{tabular}{lcccccccc}
\toprule
Benchmark & $d_x$ & $d_W$ & $|\Theta|$ & $n_{\mathrm{init}}$ & budget & seeds & $\alpha$ \\
\midrule
Spike (synthetic)          & 1  & 1  & 9  & 8  & 30  & 10 & 0.90   \\
Robot push (goal-transfer) & 3  & 2  & 8   & 10 & 60  & 10 & 0.70  \\
SPEAR (cluster-transfer)   & 26 & 10 & 7     & 40 & 80  & 10 & 0.70   \\
Reactor (scale-transfer)   & 3  & 3  & 9     & 16 & 40  & 10 & 0.98 \\
\bottomrule
\end{tabular}
\caption{Benchmark configurations. Budget is BO iterations after the initial
design; total evaluations $=n_{\mathrm{init}}+\text{budget}$. $\alpha$ is the
confidence-level CVaR parameter (worst $1-\alpha$ fraction).}
\label{tab:bench}
\end{table*}

\paragraph{Spike (mechanism illustration).}
A $2$-dimensional analytic testbed with a low-measure tail spike at
$w\!\approx\!0.85$ over a mean-optimal trap at $x\!\approx\!0.15$ and a
CVaR-safe design at $x\!\approx\!0.72$. This is a controlled illustration of the mechanism, \emph{not} a
real-world claim; the truth is analytic so the CVaR oracle is exact up to the
dense evaluation grid.

\paragraph{Robot push (goal-transfer).}
The single-robot Box2D pushing task of \citet{nguyenquoc-neurips21a}: the design
$x=(r_x,r_y,t_p)$ ($d_x{=}3$) sets the robot's start location and push duration,
and the context $W$ ($d_W{=}2$) is an uncontrollable perturbation of the robot's
start location, drawn from a truncated-normal actuation law centered at the
nominal start. The loss is the residual distance between the pushed object and a
fixed goal; small perturbations barely deflect the object while large ones cause
it to miss badly, which is the tail that $\mathrm{CVaR}_\alpha$ targets. The
transfer axis is the goal \emph{location}: we obtain the $K{=}8$ sources by
displacing the target goal to eight different nearby positions, and fit one
frozen GP expert per source goal from an independent risk-averse BO campaign on
that goal, snapshotted at budget $120$ ($=8$ experts). Because the push dynamics
are identical across all goals, each source captures the correct dynamics but
places the CVaR-optimal design at a slightly wrong location; this is the benign
end of the misspecification spectrum, where every source models the target
reasonably and committing to the wrong one costs little. 

\paragraph{Reactor (scale-up transfer).}
A cooled batch-reactor thermal-runaway model \citep{misicko-iecr26a}. The
design $x=(\mathrm{Da},\chi,c_0)$ ($d_x{=}3$) is an operating recipe (throughput
via the Damk\"ohler number, cooling multiplier, initial reactant concentration),
and the context $W=(\Delta B,\varphi,\theta_0)$ ($d_W{=}3$) is an uncontrollable
disturbance collecting a feed-impurity exotherm, a cooling-fault multiplier, and
control-noise on the initial temperature. The loss is the operating cost plus a
large penalty whenever the peak temperature exceeds the safety threshold; because
the cheapest recipes are also the most likely to trigger this rare but
catastrophic runaway, minimizing average cost is unsafe and
$\mathrm{CVaR}_\alpha$ is the appropriate objective. The transfer axis is reactor
\emph{scale}, encoded through the Stanton number, mirroring industrial scale-up
from small bench reactors to an unseen production vessel: as volume grows faster
than surface area, cooling becomes less effective, so experts fit on smaller
reactors systematically underestimate the target's failure tail. Each of the $9$
leave-one-out source scales contributes one frozen expert, fit from a risk-averse
BO campaign on that scale and snapshotted at budget $60$ ($=9$ experts). This is
the severe end of the spectrum. 

\paragraph{SPEAR (algorithm-configuration transfer).}
Real AClib data \citep{hutter-lion14a}: $604$ SWV software-verification SAT
instances $\times\,1000$ pre-evaluated \textsc{Spear} configurations
\citep{babic-sd07a}. The design $x$ is a $26$-parameter solver configuration
($d_x{=}26$); the context $W$ is a problem instance, represented by its SATzilla
features reduced to a $d_W{=}10$-dimensional PCA of the $138$ raw features (the
PCA keeps the acquisition and elimination search low-dimensional while retaining
the instance structure that drives runtime); and the loss is the configured
solver's runtime, right-censored at $300$ s. Runtimes are
heavy-tailed a configuration solves most instances quickly but can be orders of
magnitude slower on a few hard ones, so the average hides exactly the failures
$\mathrm{CVaR}_\alpha$ targets. The transfer axis is the instance population: we
$k$-means cluster the instances into $K{=}8$ structural sub-families, hold one out
as the target distribution over $W$, and let each of the remaining $7$ source
clusters contribute one frozen expert, fit from a risk-averse BO campaign on that
cluster and snapshotted at budget $300$ ($=7$ experts). The learner must thus
configure the solver for an unseen instance distribution. Ground-truth CVaR is
exact, computed over the held-out cluster's real instances.

% ---------------------------------------------------------------------
\subsection{Computational cost}
\label{app:cost}

\paragraph{Per-iteration work.}
Because pool members have \emph{frozen} hyper-parameters, there is \emph{no
per-iteration MLE refit}: each surviving member is a GP conditioned on the $t$
observations, costing one $O(t^3)$ Cholesky plus $O(M t^2)$ for the $M$ per-atom
posterior evaluations. One BO iteration therefore costs
$O\!\big(|\Theta_{\mathrm{surv}}|\,(t^3+M t^2)\big)$ plus the acquisition
optimization ($|\Theta_{\mathrm{surv}}|$ per-member $c_l$ minimizations). Cost
grows \emph{linearly} in $|\Theta|$, with $d_W$ through $M$ (the atom count), and
with $d_x$ only through kernel evaluations. Implementation economies:
frozen experts (no hyper-parameter optimization), analytic per-atom bands (no
posterior sampling), a shared fixed atom set reused across designs and
iterations, and elimination that \emph{monotonically shrinks}
$|\Theta_{\mathrm{surv}}|$ over a run.

% ---------------------------------------------------------------------
\subsection{Final hyper-parameters}
\label{app:hparams}

All values are fixed across seeds; none are tuned per seed or per run.
\begin{itemize}
\item \textbf{Band:} $\beta=3.0$ (raw $\sigma$-coefficient, i.e.\ band
  $\mu\pm\beta\sigma$).
\item \textbf{CVaR level $\alpha$:} per Table~\ref{tab:bench}
  ($\alpha\in\{0.70,0.90,0.98\}$) chosen by the problem's tail-mass rationale.
\item \textbf{Budget:} $n_{\mathrm{init}},\text{budget}$ per Table~\ref{tab:bench}.
\item \textbf{Experts:} kernels $\{$Mat\'ern-$3/2$, Mat\'ern-$5/2$, RBF$\}$, fitted on offline tasks.
\item \textbf{Recommender:} pessimistic ($\Gamma$-maximin) is the default; we additionally report the
  medoid and borda-rank recommenders, all scored from the
  same run.
\end{itemize}

% ---------------------------------------------------------------------
\subsection{Software and hardware}
\label{app:env}
All experiments were conducted using 2 CPU cores and 8 GiB of RAM. The HPC nodes utilized for the computations are equipped with two AMD Milan 7763 processors and a total of 256 GiB of main memory.

\section{Additional Results}

% ---------------------------------------------------------------------
\subsection{Significance tests}
\label{app:sig}

We test all methods that receive the same transferred candidate pool: \our,
CV-UCB, CV-TS, $\rho\text{-KG}^{\mathrm{apx}}$ and BMA-UCB on the four
benchmarks the experiment section reports: the synthetic spike testbed and the
three real-world applications. Because every method is run from the same initial
designs and the same GP conditioning on each seed, runs are matched across methods
and every comparison is paired. We summarize each run by two scalars: the
final CVaR regret of the design recommended at the last budget, and the
anytime CVaR regret, i.e. the regret of the recommended design averaged
over the BO phase, which summarizes the whole curve the figures show rather than a
single endpoint. On each pair of methods we run the two-sided exact Wilcoxon
signed-rank test on the per-seed values at $\alpha = 0.05$; each pair is tested on
its own, without a multiplicity correction. All tests use seeds $0$--$9$, and \our
is scored by the $\Gamma$-maximin read-out used in the main text. \our is the only method that is never significantly outperformed
under either metric, and it accumulates the most significant wins under both
($4$ on final regret, $6$ on anytime regret). BMA-UCB is the
strongest baseline overall and is the only one besides \our to win any comparison,
though it also takes a loss.

% generated by _sig_win_tie_loss.py --metric both --family transfer --scope paper
%              --rule pessimistic --cap 10
\begin{table}[t]
\centering
\small
\begin{tabular}{lcc}
\toprule
Method & final W/T/L & AUC W/T/L \\
\midrule
\textbf{\our} & 4/12/0 & 6/10/0 \\
$\rho$KG$^{\mathrm{apx}}$ & 1/14/1 & 0/15/1 \\
CV-TS & 0/15/1 & 0/14/2 \\
BMA-UCB & 0/15/1 & 2/13/1 \\
CV-UCB & 0/14/2 & 0/12/4 \\
\bottomrule
\end{tabular}
\caption{Significant wins/ties/losses over every pair of methods that share the
transferred candidate pool, pooled over the four benchmarks (spike, robot pushing, reactor and SPEAR. Each entry is a paired two-sided exact Wilcoxon signed-rank test over
the shared seeds: a win denotes significantly lower regret ($p<0.05$), a loss
significantly higher, and a tie no significant difference.}
\label{tab:win_tie_loss}
\end{table}

% ---------------------------------------------------------------------
\subsection{Robustness to the recommender rule}
\label{app:add_recommender}
\begin{figure}[t]
\centering
\includegraphics[width=0.7\linewidth]{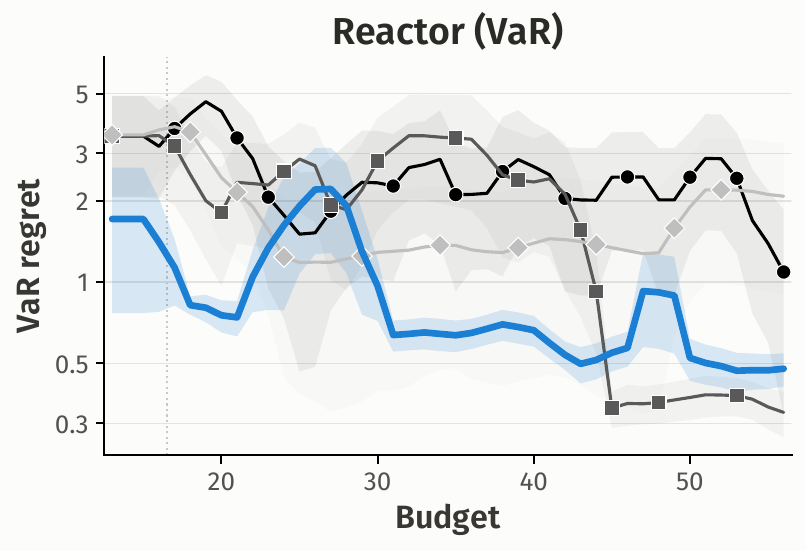}
\caption{Reactor under $\mathrm{VaR}_{0.98}$: VaR regret vs.\ budget. The method
ordering matches the CVaR result. Mean $\pm$ 1 SEM.}
\label{fig:add_var}
\end{figure}

The main text reports \our with the pessimistic recommender. Since the
recommendation is a post-hoc read-out of the surviving pool $\Theta_T$ that does
not affect which designs are queried, we can re-score every run under three
rules from identical data. Each rule scores a queried design $x$ by aggregating
its posterior-mean CVaR $m_\theta(x) := \mathrm{CVaR}_\alpha[\mu_T^\theta(x,W)]$
across the surviving candidates $\theta \in \Theta_T$, and returns the queried
design that scores best:
\begin{itemize}
\item $\Gamma$-maximin:
  $\hat x \in \arg\min_{x} \max_{\theta \in \Theta_T} m_\theta(x)$,
  the design with the lowest \emph{worst-case} CVaR over survivors.
\item Medoid: return the queried design closest to the
  survivor-averaged prediction, i.e.\ minimizing
  $\sum_{\theta \in \Theta_T} \big(m_\theta(x) - \bar m(x)\big)^2$ with
  $\bar m(x) = \tfrac{1}{|\Theta_T|}\sum_{\theta} m_\theta(x)$; this favors
  designs on which the survivors agree.
\item Borda-rank: for each survivor $\theta$, rank the queried designs
  by $m_\theta(\cdot)$; return the design with the best total (Borda) rank
  summed over survivors, a robust ordinal aggregate insensitive to the scale of
  any single candidate.
\end{itemize}
Figure~\ref{fig:add_recommender} shows the three real-world tasks under each
rule. The conclusion is unchanged in every panel: \our matches the baselines on
robot push and reaches lower CVaR regret on the reactor and SPEAR regardless of
which rule is used, so the advantage is not a byproduct of a particular
read-out.

% ---------------------------------------------------------------------

\begin{figure}[t]
    \centering
    \includegraphics[width=0.95\linewidth]{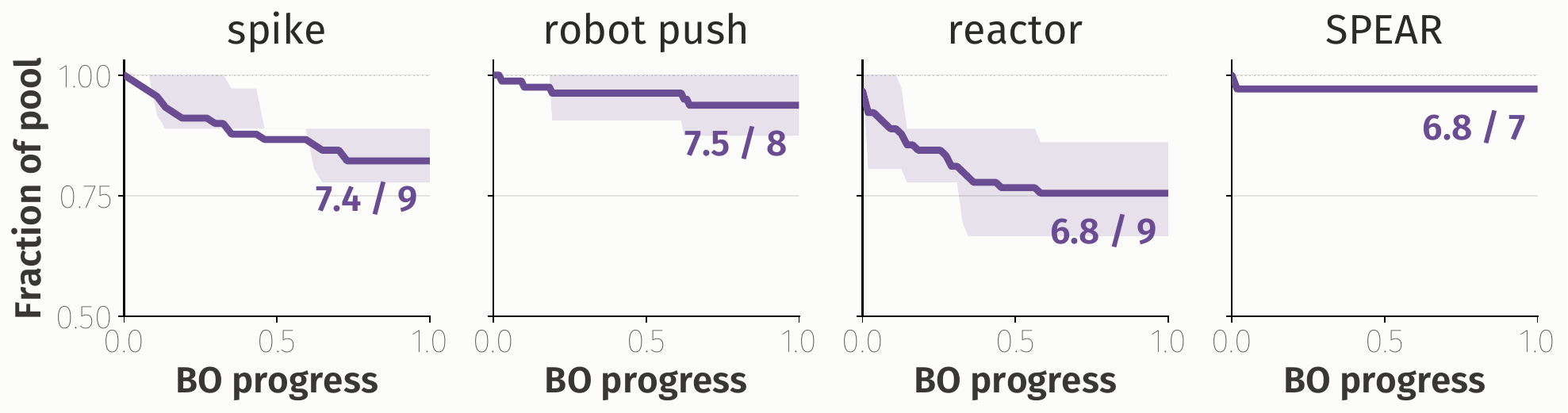}
\caption{Fraction of candidate surrogates surviving over the BO run. On spike and
    the reactor, \our{} prunes source surrogates; on robot push and SPEAR the pool
    remains essentially intact.}
    \label{fig:ablation_elim}
\end{figure}
\begin{figure}[t]
\centering
\includegraphics[width=0.48\linewidth]{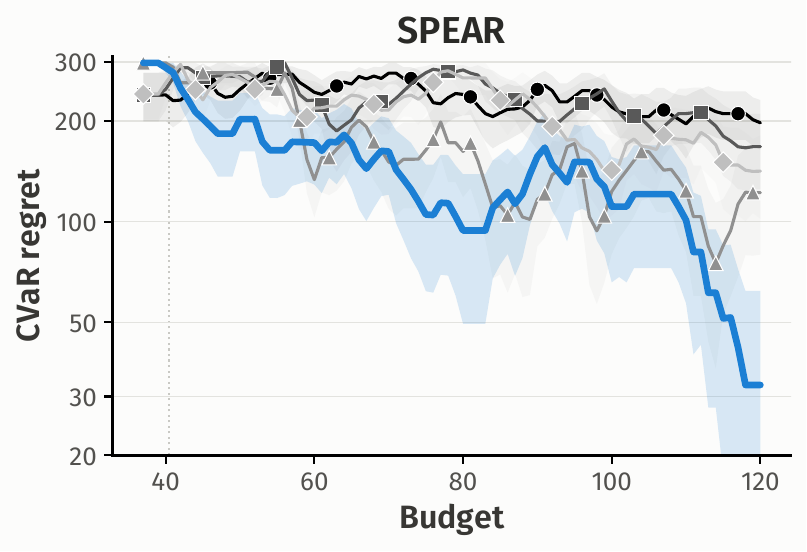}\hfill
\includegraphics[width=0.48\linewidth]{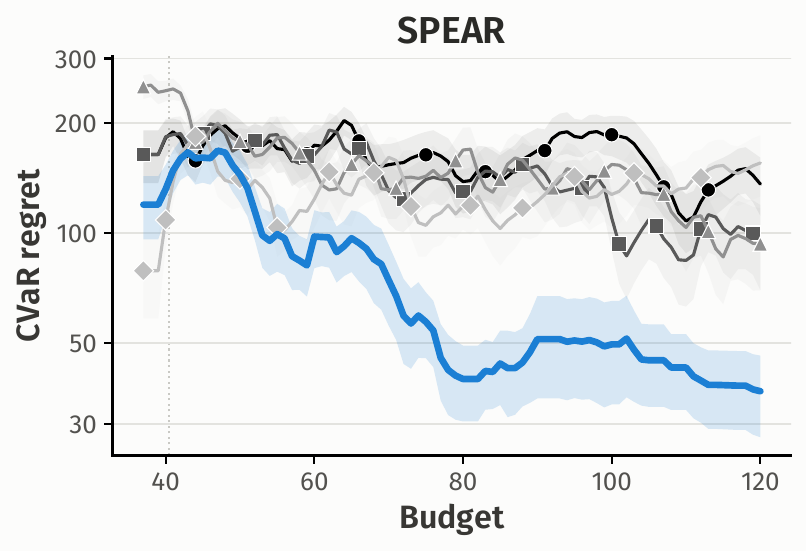}\hfill
\includegraphics[width=0.48\linewidth]{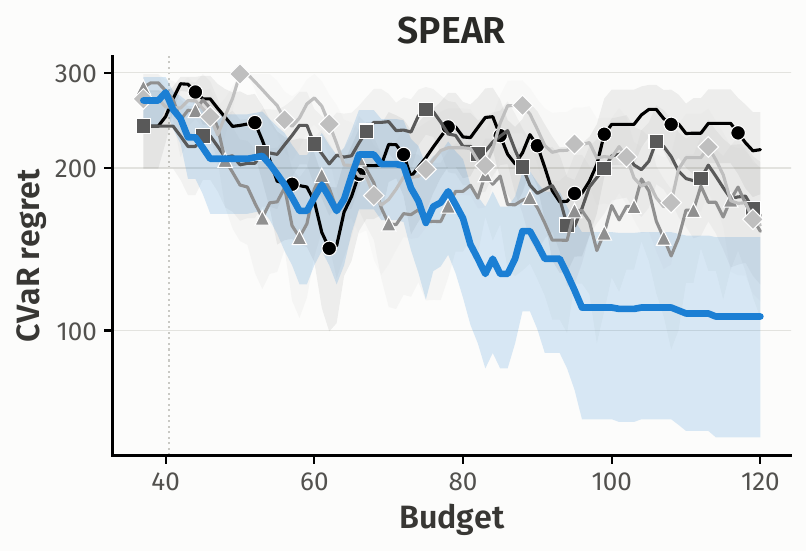}\hfill
\includegraphics[width=0.48\linewidth]{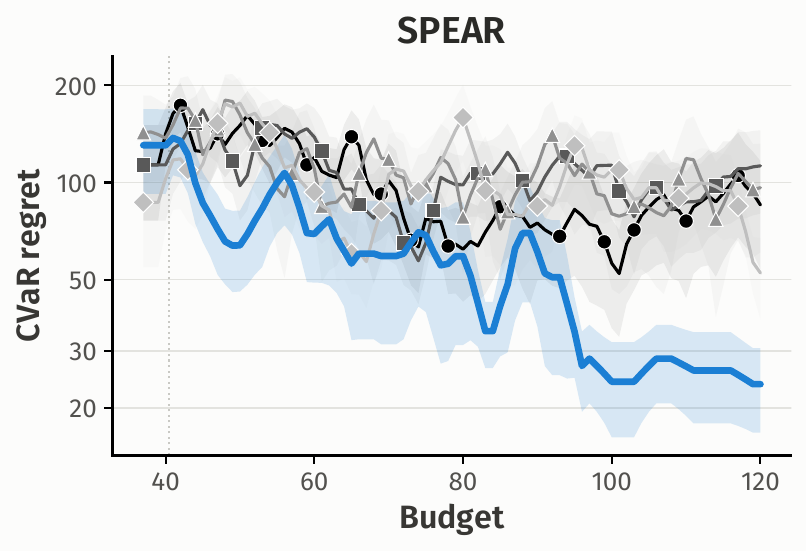}
\caption{SPEAR across targets and pool budgets: $t_1$/$t_5$ at snapshot budgets
$120$ and $300$ (left to right). Mean
$\pm$ 1 SEM.}
\label{fig:add_spear}
\end{figure}

% ------------------------------% ---------------------------------------------------------------------
\subsection{Ablation over targets and pool budgets (SPEAR)}
\label{app:add_spear}
The SPEAR setup involves two design choices whose effect on the result is worth
isolating: which cluster is held out as the target distribution over $W$, and the
snapshot budget at which the source experts are frozen. We ablate both.
Figure~\ref{fig:add_spear} reports two target clusters ($t_1$, $t_5$) crossed with
two snapshot budgets ($120$, $300$). \our attains the lowest final CVaR regret in
all four combinations, and the margin is stable across them: neither the choice of
target nor the amount of source data changes the qualitative picture. The
single-selection baselines periodically commit to a source that times out on the
held-out instances, precisely the failure \our avoids by hedging over the
surviving pool rather than committing to one member.

 \subsection{Elimination dynamics}
\label{app:add_elim}
Figure~\ref{fig:ablation_elim} tracks the fraction of the candidate pool that
survives over the BO run. Elimination is deliberately conservative: a candidate
is removed only once its accumulated residuals are statistically inconsistent
with the data, so a correctly specified surrogate is never discarded with high probability. As the main
text notes, the guarantee requires only that \emph{some} correct candidate
survive; additional survivors add conservatism but do not invalidate it. We
therefore expect and observe that elimination fires only where the pool
contains members the data can rule out: it prunes the confidently tail-blind
sources on the spike and the reactor, and stays essentially inert on SPEAR and
robot push, where every candidate is already a reasonable model of the target.

Low elimination is thus the predicted behavior on a well-specified pool, not a
degenerate case, because \our{} rests on two complementary mechanisms. When
candidates are separable, elimination prunes the confidently-wrong ones. When
they are not (as on SPEAR) elimination correctly abstains and $\Theta_T \approx
\Theta$, and the second mechanism takes over: a single-selection baseline must
still commit to one surrogate, and marginal likelihood cannot tell which of
several equally plausible candidates best captures the tail, so whichever it picks
occasionally lands on a source that times out on the held-out instances. \our{}
instead returns the design that is best in the worst case over \emph{all}
surviving candidates, so it never places that bet. When elimination is inert,
\our{} reduces to this robust recommendation over the transferred pool---losing
nothing, since no candidate is statistically ruled out---while on the spike and
reactor elimination sharpens it further by removing members that would otherwise
enter the worst-case recommendation.
% ---------------- 

\begin{figure*}[t]
\centering
\includegraphics[width=0.9\linewidth]{figures/realworld/push_goal_legend.pdf}\\[2pt]
\emph{Medoid}\\
\includegraphics[width=0.30\linewidth]{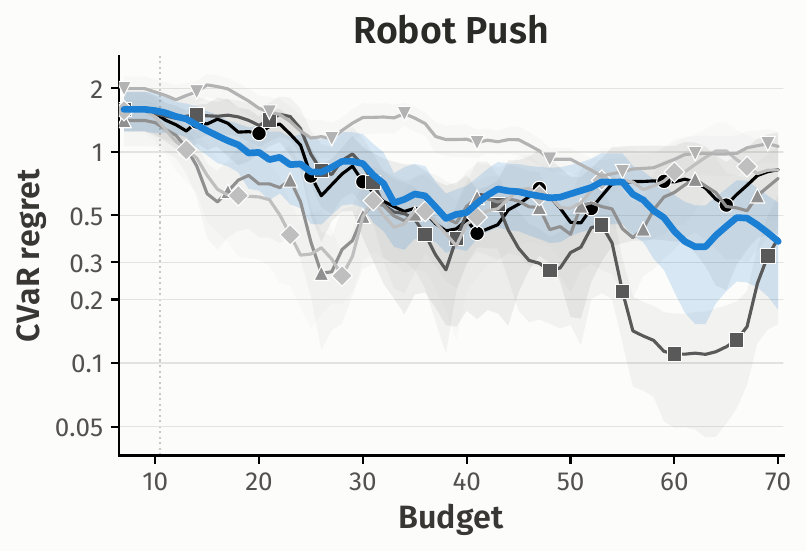}\hfill
\includegraphics[width=0.30\linewidth]{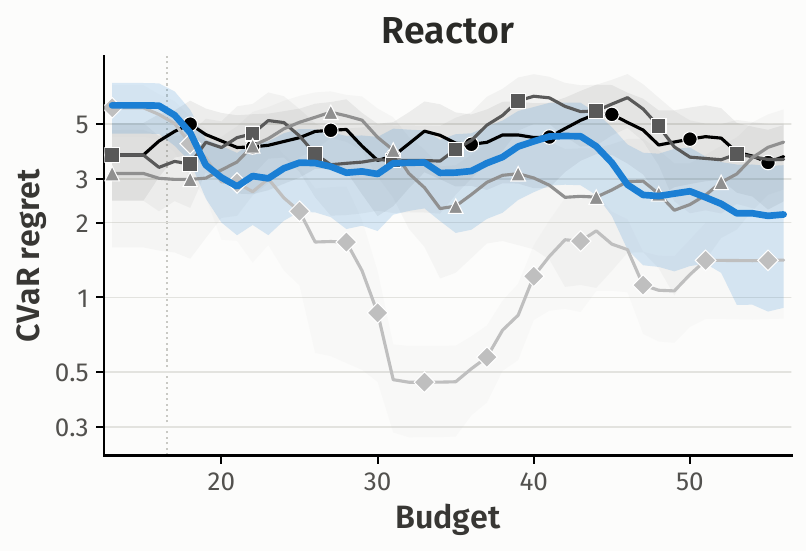}\hfill
\includegraphics[width=0.30\linewidth]{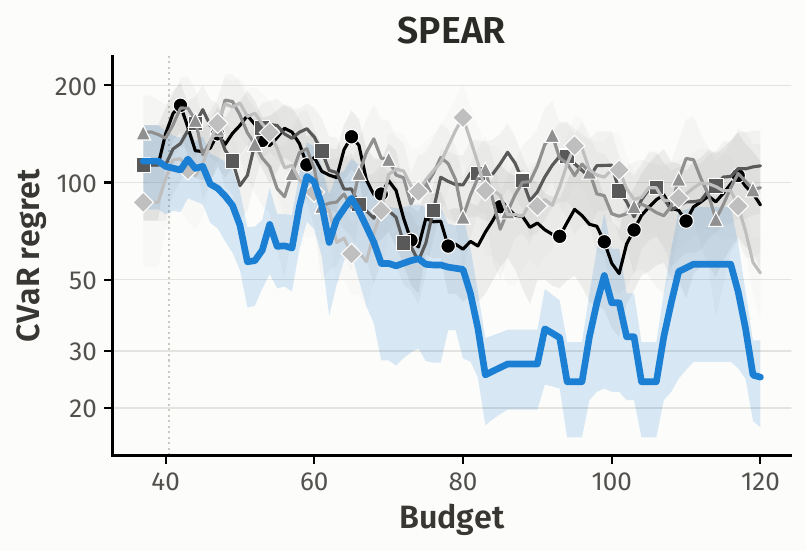}\\[2pt]
\emph{$\Gamma$-maximin}\\
\includegraphics[width=0.30\linewidth]{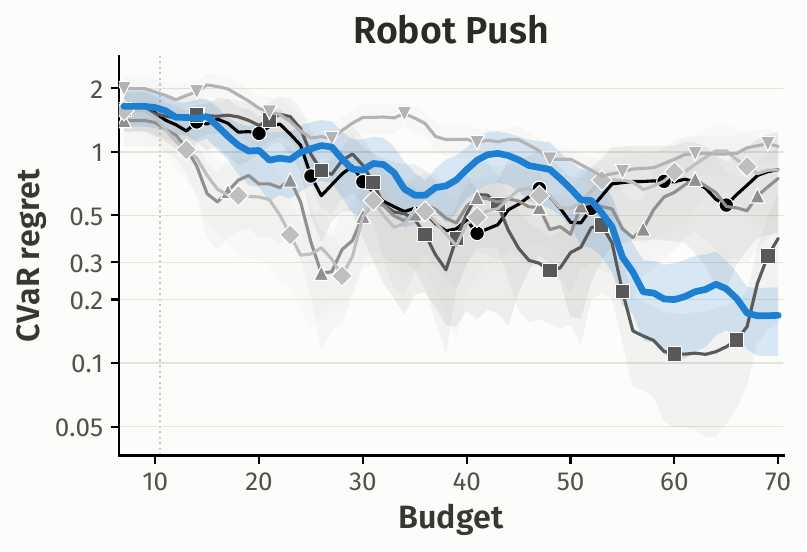}\hfill
\includegraphics[width=0.30\linewidth]{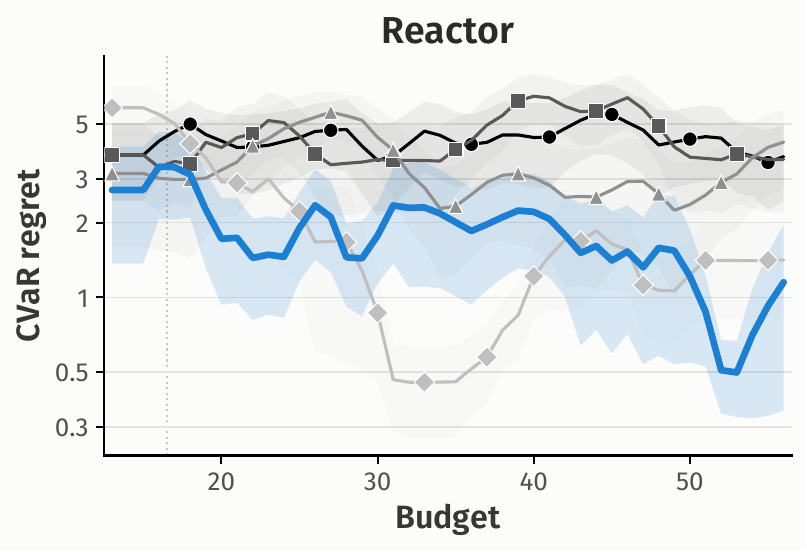}\hfill
\includegraphics[width=0.30\linewidth]{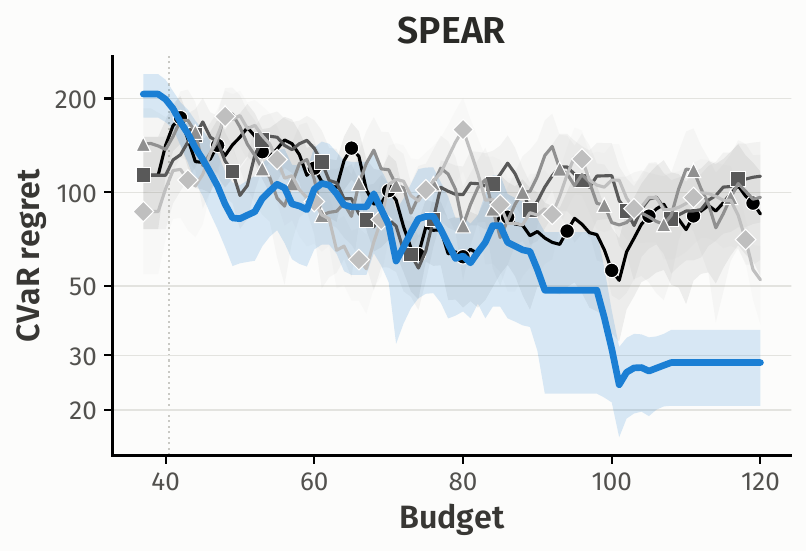}\\[2pt]
\emph{Borda-rank}\\
\includegraphics[width=0.30\linewidth]{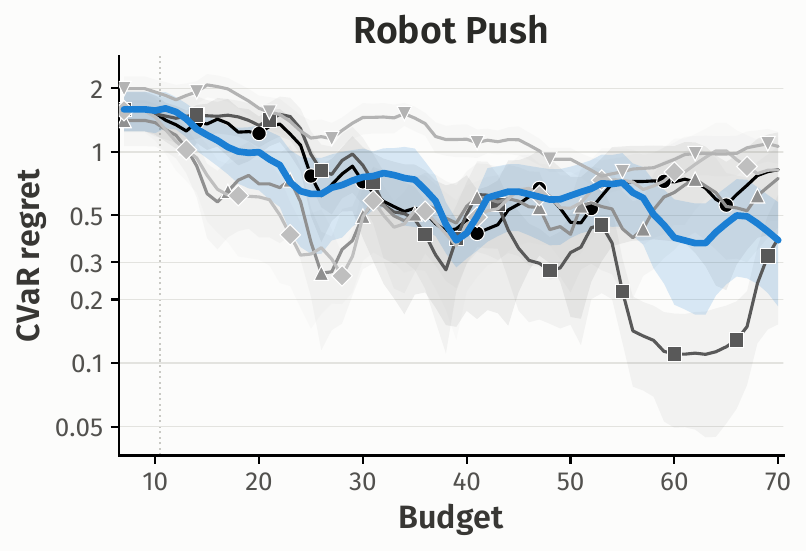}\hfill
\includegraphics[width=0.30\linewidth]{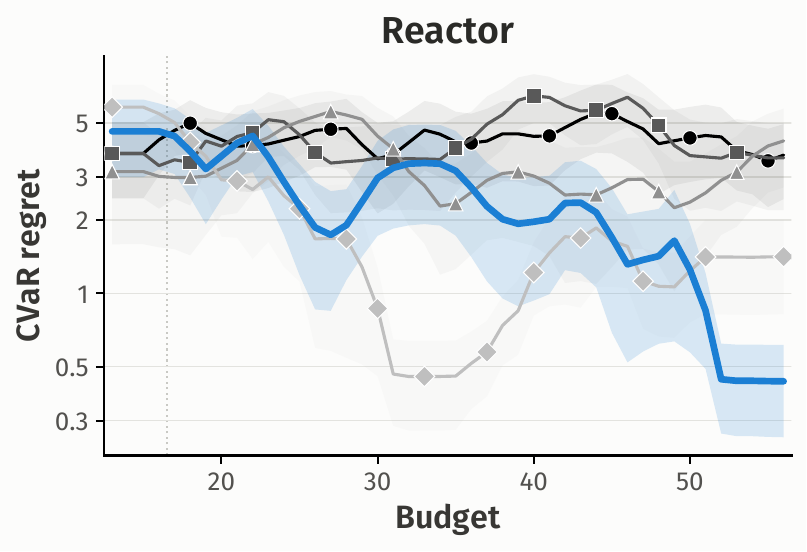}\hfill
\includegraphics[width=0.30\linewidth]{figures/appendix/borda/spear_t5_bo300k3_mean_smooth.pdf}
\caption{CVaR regret vs.\ budget on robot push (left), reactor (middle), and SPEAR
(right) under the three recommender rules (rows).}
\label{fig:add_recommender}
\end{figure*}

% ---------------------------------------------------------------------
\subsection{An alternative risk measure: VaR}
\label{app:add_var}
Value-at-Risk is the other standard tail summary alongside CVaR. At level
$\alpha$, $\mathrm{VaR}_\alpha[Z]$ is simply the $\alpha$-quantile of the outcome
$Z = f(x,W)$: the threshold that the loss stays below with probability $\alpha$.
$\mathrm{CVaR}_\alpha[Z]$ is the average of the worst $(1-\alpha)$ fraction of
outcomes, i.e.\ the mean of $Z$ conditional on exceeding
$\mathrm{VaR}_\alpha[Z]$. The two therefore answer different questions: VaR asks
how bad is the boundary of the tail, while CVaR asks how bad is the
tail on average. This makes VaR blind to the shape of the tail beyond the
quantile, two designs with the same $\mathrm{VaR}_\alpha$ can have very different
worst-case behavior, whereas CVaR integrates over it and is sensitive to rare,
extreme losses. CVaR is also coherent (in particular subadditive), a property VaR
lacks, which is why CVaR is usually preferred in risk-averse optimization.
Nonetheless VaR remains widely used, and reporting it here lets us check that our
gains are not an artifact of the specific functional we optimize.

Importantly, nothing in \our{} is tied to CVaR. The pointwise confidence bands,
the elimination criteria, and the robust recommendation interact with the risk
functional only through a monotone, translation-invariant read-out of the
context distribution, the two properties the CVaR-shift argument (Lemma~1)
actually uses. VaR satisfies both, so the entire machinery applies to it
unchanged, simply by replacing every $\mathrm{CVaR}_\alpha[\cdot]$ with
$\mathrm{VaR}_\alpha[\cdot]$ in the bands and the acquisition rule.

Figure~\ref{fig:add_var} re-runs the reactor experiment with the objective and the
bands re-instantiated at $\mathrm{VaR}_{0.98}$ in place of $\mathrm{CVaR}_{0.98}$;
all other settings are held fixed. \our again separates from the baselines and
attains the lowest final VaR regret, confirming that its advantage comes from the
credal treatment of the source pool---eliminating tail-misjudging members and
hedging over the survivors---rather than from any property specific to the CVaR
functional. We note that this concerns the \emph{algorithm}; the regret guarantee
additionally relies on a lacing-value construction (Step~2 of the proof), which we
establish for CVaR and which admits a VaR analogue via the V-UCB analysis of
\citet{nguyenquoc-icml21a}, but a unified guarantee across risk measures is left
to future work.

% \newpage
%\input{ReproducibilityChecklist.tex}
\end{document}